\def\BibTeX{{\rm B\kern-.05em{\sc i\kern-.025em b}\kern-.08em
    T\kern-.1667em\lower.7ex\hbox{E}\kern-.125emX}}
\definecolor{darkred}{rgb}{0.7,0.1,0.1}
\definecolor{medred}{rgb}{0.5,0.1,0.1}
\definecolor{midred}{rgb}{0.7,0.2,0.2}
\definecolor{vdarkred}{rgb}{0.4,0.1,0.1}
\definecolor{darkslategray}{rgb}{0.18, 0.31, 0.31} %#2F4F4F
\definecolor{platinum}{rgb}{0.9, 0.89, 0.89} %#E5E4E2
\definecolor{gray}{rgb}{.4,.4,.4}
\definecolor{midgrey}{rgb}{0.5,0.5,0.5}
\definecolor{middarkgrey}{rgb}{0.35,0.35,0.35}
\definecolor{darkgrey}{rgb}{0.3,0.3,0.3}
\definecolor{darkred}{rgb}{0.7,0.1,0.1}
\definecolor{midblue}{rgb}{0.2,0.2,0.7}
\definecolor{darkblue}{rgb}{0.1,0.1,0.5}
\definecolor{darkgreen}{rgb}{0.1,0.5,0.1}
\definecolor{defseagreen}{cmyk}{0.69,0,0.50,0}
\definecolor{purple3}{RGB}{125,38,205}          % purple3
\definecolor{tyellow1}{HTML}{FCE94F}
\definecolor{tyellow2}{HTML}{EDD400}
\definecolor{tyellow3}{HTML}{C4A000}
\definecolor{torange1}{HTML}{FCAF3E}
\definecolor{torange2}{HTML}{F57900}
\definecolor{torange3}{HTML}{C35C00}
\definecolor{tbrown1}{HTML}{E9B96E}
\definecolor{tbrown2}{HTML}{C17D11}
\definecolor{tbrown3}{HTML}{8F5902}
\definecolor{tgreen1}{HTML}{8AE234}
\definecolor{tgreen2}{HTML}{73D216}
\definecolor{tgreen3}{HTML}{4E9A06}
\definecolor{tblue1}{HTML}{729FCF}
\definecolor{tblue2}{HTML}{3465A4}
\definecolor{tblue3}{HTML}{204A87}
\definecolor{tpurple1}{HTML}{AD7FA8}
\definecolor{tpurple2}{HTML}{75507B}
\definecolor{tpurple3}{HTML}{5C3566}
\definecolor{tred1}{HTML}{EF2929}
\definecolor{tred2}{HTML}{CC0000}
\definecolor{tred3}{HTML}{A40000}
\definecolor{tlgray1}{HTML}{EEEEEC}
\definecolor{tlgray2}{HTML}{D3D7CF}
\definecolor{tlgray3}{HTML}{BABDB6}
\definecolor{tdgray1}{HTML}{888A85}
\definecolor{tdgray2}{HTML}{555753}
\definecolor{tdgray3}{HTML}{2E3436}
\newtheoremstyle{nthmstyle}% name
{3pt}% space above
{3pt}% space below
{}% body font
{}% indent amount
{\bfseries}% theorem head font
{.}% punctuation after theorem head
{.5em}% space after theorem head
{}% theorem head spec
\theoremstyle{nthmstyle}
\newtheorem{proposition}{Proposition}
\newtheorem{example}{Example}
\crefname{enumi}{}{}
\crefname{rstprop}{Proposition}{Propositions}
\newcommand{\fml}[1]{{\mathcal{#1}}}
\newcommand{\tn}[1]{\textnormal{#1}}
\newcommand{\msf}[1]{\ensuremath\mathsf{#1}}
\newcommand{\mbf}[1]{\ensuremath\mathbf{#1}}
\newcommand{\mbb}[1]{\ensuremath\mathbb{#1}}
\newcommand{\oper}[1]{\ensuremath\mathsf{#1}}
\newcommand{\waxp}{\ensuremath\mathsf{WAXp}}
\newcommand{\wcxp}{\ensuremath\mathsf{WCXp}}
\newcommand{\axp}{\ensuremath\mathsf{AXp}}
\newcommand{\cxp}{\ensuremath\mathsf{CXp}}
\newcommand{\aex}{\ensuremath\mathsf{AEx}}
\newcommand{\relevant}{\oper{Relevant}}
\newcommand{\irrelevant}{\oper{Irrelevant}}
\newcommand{\nfrac}{\nicefrac}
\newcommand{\exv}{\ensuremath\mathbf{E}}
\newcommand{\cf}{\ensuremath\upsilon} % \nu, \xi
\newcommand{\cfn}[1]{\ensuremath\upsilon_{#1}} % \;\!\! \nu, \xi
\newcommand{\svn}[1]{\msf{Sc}_{#1}}
\newcommand{\mdist}{\ensuremath\mathfrak{d}}
\newcommand{\similar}{\ensuremath\sigma}
\DeclareMathOperator*{\sv}{\msf{Sc}}
\definecolor{gray}{rgb}{.4,.4,.4}
\definecolor{midgrey}{rgb}{0.5,0.5,0.5}
\definecolor{middarkgrey}{rgb}{0.35,0.35,0.35}
\definecolor{darkgrey}{rgb}{0.3,0.3,0.3}
\definecolor{darkred}{rgb}{0.7,0.1,0.1}
\definecolor{midblue}{rgb}{0.2,0.2,0.7}
\definecolor{darkblue}{rgb}{0.1,0.1,0.5}
\definecolor{defseagreen}{cmyk}{0.69,0,0.50,0}
\newcommand{\jnoteF}[1]{}
\newcounter{Comment}[Comment]
\declaretheoremstyle[
  headfont=\bfseries,
  %bodyfont=\normalfont,
  bodyfont=\itshape,
  numberwithin=section,
]{StdThmStyle}
\tikzset{
  0 my edge/.style={densely dashed, my edge},
  my edge/.style={-{Stealth[]}},
}
\setlist{nosep}
\newcommand\nparagraph{%
  \@startsection{paragraph}
    {4}
    {\z@}
%    {3.25ex \@plus1ex \@minus.2ex}
    {0.225ex \@plus0.225ex \@minus.125ex}
    {-1em}
    {\normalfont\normalsize\bfseries}%
}
\begin{document}

% Title

% Your title must be in mixed case, not sentence case.
% That means all verbs (including short verbs like be, is, using,and go),
% nouns, adverbs, adjectives should be capitalized, including both words in hyphenated terms, while
% articles, conjunctions, and prepositions are lower case unless they
% directly follow a colon or long dash
\title{SHAP scores fail pervasively even when Lipschitz succeeds}

\author {
  % Authors
  Olivier L\'{e}toff\'{e}\textsuperscript{\rm 1},
  Xuanxiang Huang\textsuperscript{\rm 2},
  Joao Marques-Silva\textsuperscript{\rm 3}
}
\affiliations {
    % Affiliations
    \textsuperscript{\rm 1}Univ.~Toulouse, France\\
    \textsuperscript{\rm 2}CNRS@CREATE, Singapore\\
    \textsuperscript{\rm 3}ICREA, Univ.~Lleida, Spain\\
    olivier.letoffe@orange.fr,
    xuanxiang.huang.cs@gmail.com,
    jpms@icrea.cat
}

\maketitle

\begin{abstract}
  The ubiquitous use of Shapley values in eXplainable AI (XAI) has
  been triggered by the tool SHAP, and as a result are commonly
  referred to as SHAP scores.
  %
  %Shapley values have been used ubiquitously in eXplainable AI (XAI),
  %being referred to as SHAP scores.
  %
  Recent work devised examples of machine learning (ML) classifiers
  for which the computed SHAP scores are thoroughly unsatisfactory, by
  allowing human decision-makers to be misled.
  Nevertheless, such examples could be perceived as somewhat
  artificial, since the selected classes must be interpreted as
  numeric.
  Furthermore, it was unclear how general were the issues identified
  with SHAP scores.
  This paper answers these criticisms.
  First, the paper shows that for Boolean classifiers there are
  arbitrarily many examples for which the SHAP scores must be deemed
  unsatisfactory.
  Second, the paper shows that the issues with SHAP scores are also
  observed in the case of regression models.
  In addition, the paper studies the class of regression models that
  respect Lipschitz continuity, a measure of a function's rate of
  change that finds important recent uses in ML, including model
  robustness.
  Concretely, the paper shows that the issues with SHAP scores occur
  even for regression models that respect Lipschitz continuity.
  %
  %%a well-known property used for
  %
  Finally, the paper shows that the same issues are guaranteed to
  exist for arbitrarily differentiable regression models.
\end{abstract}

\section{Introduction} \label{sec:intro}

SHAP scores~\cite{lundberg-nips17} denote the use of Shapley values in
eXplainable AI (XAI).% 
\footnote{Proposed in the 1950s~\cite{shapley-ctg53}, Shapley values
find important uses in different domains, that include game theory
among others~\cite{roth-bk88}.}
Popularized by the tool SHAP~\cite{lundberg-nips17}, SHAP scores find
an ever-increasing range of practical uses, that have continued to
expand at a fast pace in recent years~\cite{molnar-bk23,mishra-bk23}.
Many proposed applications of SHAP scores impact human beings, %~\cite{},
in different ways, and so are categorized as being
high-risk~\cite{euaict24}.
Given the advances in machine learning, and the proposals for
regulating its use~\cite{alegai19,wheo23,euaict24}, one should expect
the uses of SHAP scores to continue to increase.

Despite their immense success, SHAP scores face a number of well-known
limitations. First, there are practical limitations, as exemplified by
a number of recent
works~\cite{blockbaum-aistats20,friedler-icml20,najmi-icml20,friedler-nips21}. Second,
as reported in recent work~\cite{hms-corr23a}, there are theoretical
limitations. Concretely, there exist machine learning (ML) classifiers
for which the computed SHAP scores are thoroughly unsatisfactory given
self-evident properties of those classifiers.
The theoretical limitations of SHAP scores should be perceived as
rather %extremely 
problematic, since they challenge the foundations of SHAP scores, and
so also serve to question the experimental results obtained in massive
numbers of practical uses of SHAP scores.

Nevertheless, the example classifiers studied in earlier
work~\cite{hms-corr23a} can be criticized in different ways.
First, it is unclear whether those classifers are easy to devise,
i.e.\ it might be the case that the identified examples were selected
from among a small number of possible cases.
Second, the theoretical limitations of SHAP scores were proved for the
concrete case of ML classifiers, where classes are viewed as numeric
values~\cite{hms-corr23a}. Still in many important uses of ML, the
computed output is most often a real value (e.g.\ some
probability). As a result, one might hope that the theoretical
limitations of SHAP scores would not apply in those settings.

This paper proves that such hypothetical criticisms are without merit.
%that such hope is futile, in several decisive ways.
%
First, the paper proves that there are arbitrarily many boolean
classifiers which will exhibit one or more of the issues identified
with SHAP scores.
Second, the paper demonstrates the existence of regression models for 
which the computed SHAP scores are also entirely unsatisfactory.
Second, and given the importance of Lipschitz continuity in
adversarial robustness~\cite{kornblith-icml21,allgower-csl22}, the
paper also shows the existence of Lipschitz-continuous regression
models for which the computed SHAP scores are guaranteed to (again) be
unsatisfactory. Furthermore, some of the proposed regression models
%counterexamples
are parameterized, meaning that the number of examples of regression
models with unsatisfactory
%counterexamples to
SHAP scores is unbounded, and the degree of dissatisfaction can be
made arbitrarily large. Third, and finally, the paper shows that 
additional regression models
%counterexamples
are guaranteed to exist for arbitrarily differentiable regression
functions.
To prove the above results, the paper exploits a generalization of the
standard definitions in formal
explainability~\cite{kutyniok-jair21,msi-aaai22,darwiche-lics23}, to 
account for regression models, especially when those regression models
are defined on real-value features. %% Cite: ms-rw22!!!

%\jnote{Add missing references; polish.}
%\cite{msh-corr23}
%\jnote{}

\section{Preliminaries} \label{sec:prelim}

\subsubsection{Classification \& regression problems.}
%~\\
%
Let $\fml{F}=\{1,\ldots,m\}$ denote a set of features.
Each feature $i\in\fml{F}$ takes values from a domain $\mbb{D}_i$.
Domains can be categorical or ordinal. If ordinal, domains can be
discrete or real-valued.
Feature space is defined by
$\mbb{F}=\mbb{D}_1\times\mbb{D}_2\times\ldots\times\mbb{D}_m$. 
The notation $\mbf{x}=(x_1,\ldots,x_m)$ denotes an arbitrary point in 
feature space, where each $x_i$ is a variable taking values from
$\mbb{D}_i$. Moreover, the notation $\mbf{v}=(v_1,\ldots,v_m)$
represents a specific point in feature space, where each $v_i$ is a
constant representing one concrete value from $\mbb{D}_i$.
A classifier maps each point in feature space to a class taken from
$\fml{K}=\{c_1,c_2,\ldots,c_K\}$. Classes can also be categorical or
ordinal.
A boolean classifier denotes the situation where
$\mbb{D}_i=\mbb{B}=\{0,1\}$.
In the case of regression, each point in feature space is mapped to an
ordinal value taken from a set $\mbb{K}$, e.g.\ $\mbb{K}$ could denote
$\mbb{Z}$ or $\mbb{R}$.
Therefore, a classifier $\fml{M}_{C}$ is characterized by a
non-constant \emph{classification function} $\kappa$ that maps feature
space $\mbb{F}$ into the set of classes $\fml{K}$,
i.e.\ $\kappa:\mbb{F}\to\fml{K}$.
A regression model $\fml{M}_R$ is characterized by a non-constant
\emph{regression function} $\rho$ that maps feature space $\mbb{F}$
into the set elements from $\mbb{K}$, i.e.\ $\rho:\mbb{F}\to\mbb{K}$.
A classifier model $\fml{M}_{C}$ is represented by a tuple
$(\fml{F},\mbb{F},\fml{K},\kappa)$, whereas a regression model
$\fml{M}_{R}$ is represented by a tuple
$(\fml{F},\mbb{F},\mbb{K},\rho)$.
%
%\footnote{As shown in the supplemental materials, other ML models can
%also be represented with similar ideas. This is the case with the use
%of a softmax layer in neural networks (NNs).}
%
When viable, we will represent an ML model $\fml{M}$ by a tuple
$(\fml{F},\mbb{F},\mbb{T},\tau)$, with $\tau:\mbb{F}\to\mbb{T}$,
without specifying whether $\fml{M}$ denotes a classification
or a regression model.
A \emph{sample} (or instance) denotes a pair $(\mbf{v},q)$, where
$\mbf{v}\in\mbb{F}$ and either $q\in\fml{K}$, with
$q=\kappa(\mbf{v})$, or $q\in\mbb{K}$, with $q=\rho(\mbf{v})$.
%
%\textbf{Obs:} As noted below, classification and regression models can
%be generalized to the case where the model computes a set of values
%(e.g.\ using a softmax layer), and then the prediction is associated
%with the index representing the largest computed value. We refer to
%such models as a \emph{ranking model}.

\subsubsection{Running example(s).}
%~\\
%
\cref{ex:tr:rt} shows a function in tabular representation (TR) that will be used as one of the
running examples, we denote this model as $\fml{M}_{1}$.
Clearly, $\fml{F}=\{1,2\}$, $\mbb{D}_1=\mbb{D}_2=\{0,1\}$,
$\mbb{F}=\mbb{D}_1\times\mbb{D}_2$.
Depending on the value of $\alpha\in\mbb{R}$, the model $\fml{M}_{1}$ will
be viewed as a classification or as a regression model~\cite{james-bk17}.
As a result, we will refer to the function computed by $\fml{M}_{1}$
as $\tau_1:\mbb{F}\to\mbb{T}$.
If the values in the last column $\tau_1(\mbf{x})$ are integers,
then we say that we have a classification model, with the set of
classes $\fml{K}$ given by the values in the last column,
and so $\tau_1$ corresponds to a classification function $\kappa_1:\mbb{F}\to\fml{K}$,
with $\fml{K}=\{1-6\alpha,1,1+2\alpha\}$.
Otherwise, if some values in the last column $\tau_1(\mbf{x})$
are real numbers, then we say that we have a regression
model, with the (finite) set of values in the codomain given by the
values in the last column, and so $\tau_1$ corresponds to a regression
function $\rho_1:\mbb{F}\to\mbb{K}$, with $\mbb{K}=\{1-6\alpha,1,1+2\alpha\}$.
Besides the function shown in~\cref{ex:tr:rt}, additional running examples
will be introduced later in the paper.

%\begin{figure}[t]
%  \begin{subfigure}[b]{0.4575\linewidth} %\textwidth
%
%    \begin{tabular}{c} ~~\\[5pt] \end{tabular}
%
%    \centering
%    \scalebox{1.0}{
%      \renewcommand{\arraystretch}{1.15}
%      \renewcommand{\tabcolsep}{0.575em}
%      \input{./tabs/dt01}
%    }
%
%    \begin{tabular}{c} ~~\\[-2pt] \end{tabular}
%    \caption{Tabular representation (TR)}
%  \end{subfigure}
%  %
%  $\:$ $\:$
%  %
%  \begin{subfigure}[b]{0.495\linewidth} %\textwidth
%    \centering
%    \scalebox{1.0}{
%      \input{./texfigs/dt00}
%    }
%    \caption{Decision tree (DT)}
%  \end{subfigure}
%  %
%  \caption{Simple decision tree, 
%    adapted from~\cite[Fig.~8.1]{james-bk17}. 
%    The target sample is $((1,1),1)$.}
%  \label{ex:tr:rt}
%\end{figure}

\begin{figure}[t]
\centering
\renewcommand{\tabcolsep}{0.45em}
\begin{tabular}{cccc} \toprule
  row & $x_1$ & $x_2$ & $\tau_1(\mbf{x})$
  \\ \toprule
  $1$ & $0$ & $0$ & $1-6\alpha$ \\
  $2$ & $0$ & $1$ & $1+2\alpha$ \\
  $3$ & $1$ & $0$ & $1$           \\
  \tikzmarknode{a}{$4$} & $1$ & $1$ & \tikzmarknode{b}{$1$}
  \\ \bottomrule
  \begin{tikzpicture}[overlay,remember picture]
    \node[draw=midblue, thin, xshift=-0.35pt, yshift=-0.35pt, inner
      sep=2.0pt, fit=(a) (b)] {};
  \end{tikzpicture}
\end{tabular}

\caption{Tabular representation (TR) of a simple function.
The target sample is $((1,1),1)$.}
\label{ex:tr:rt}
\end{figure}

\subsubsection{Lipschitz continuity.}
%~\\
%
Let $(\mbb{F},\mdist_F)$ and $(\mbb{R},\mdist_R)$ denote metric
spaces.%
\footnote{%
$\mdist_F:\mbb{F}\times\mbb{F}\to\mbb{R}$
and 
$\mdist_R:\mbb{R}\times\mbb{R}\to\mbb{R}$
denote distance functions between two points, which we refer to as
$\mdist$. A distance function $\mdist$ respects the well-known axioms:  
(i) $\mdist(\mbf{x},\mbf{x})=0$;
(ii) if $\mbf{x}\not=\mbf{y}$, then $\mdist(\mbf{x},\mbf{y})>0$;
(iii) $\mdist(\mbf{x},\mbf{y})=\mdist(\mbf{y},\mbf{x})$; and
(iv)
$\mdist(\mbf{x},\mbf{z})\le\mdist(\mbf{x},\mbf{y})+\mdist(\mbf{y},\mbf{z})$.
Examples of distance functions include Hamming, Manhattan, Euclidean
and other distance defined by norm $l_p$, $p\ge1$,
where $\lVert\mbf{x}\rVert_{p}:=\left(\sum\nolimits_{i=1}^{m}|x_i|^{p}\right)^{\sfrac{1}{p}}$.
%
%Clearly,$\mbb{F}=\mbb{R}$ is also a metric space.
}
A regression function $\rho:\mbb{F}\to\mbb{R}$ is
Lipschitz-continuous~\cite{osearcoid-bk06} if there exists a constant
$C\ge0$ such that,
\begin{equation} \label{def:lipc}
  \forall(\mbf{x}_1,\mbf{x}_2\in\mbb{F}).
  \mdist_R(\rho(\mbf{x}_1),\rho(\mbf{x}_2))\le{C}\mdist_F(\mbf{x}_1,\mbf{x}_2),
\end{equation}
where $C$ is referred to as the Lipschitz constant.
It is well-known that any Lipschitz-continuous function is also
continuous.

The relationship between Lipschitz continuity and adversarial
robustness has been acknowledged for more than a
decade~\cite{szegedy-iclr14}, i.e.\ since the brittleness of ML models
was recognized as a significant limitation of neural networks
(NNs). In recent years, Lipschitz bounds has been used for training ML
models towards achieving some degree of
robustness~\cite{hein-nips17,daniel-iclr18,scaman-nips18,pappas-nips19,dimakis-nips20,kornblith-icml21,kolter-nips21,allgower-csl22,wang-nips22,lomuscio-cvpr23,hu-nips23}. %daniel-icml20,

\subsubsection{Additional notation.}
%~\\
%
An explanation problem is a tuple $\fml{E}=(\fml{M},(\mbf{v},q))$,
where $\fml{M}$ can either be a classification or a regression model,
and $(\mbf{v},q)$ is a target sample, with $\mbf{v}\in\mbb{F}$. 
For example, for the running example in~\cref{ex:tr:rt},
given the target sample $(\mbf{v}_1,q_1) = ((1,1),1)$, 
we can define an explanation problem $\fml{E}_{1}=(\fml{M}_{1},(\mbf{v}_1,q_1))$.

Given $\mbf{x},\mbf{v}\in\mbb{F}$, and $\fml{S}\subseteq\fml{F}$, the
predicate $\mbf{x}_{\fml{S}}=\mbf{v}_{\fml{S}}$ is defined as follows:
\[
\mbf{x}_{\fml{S}}=\mbf{v}_{\fml{S}} ~~ := ~~ \left(\bigwedge\nolimits_{i\in\fml{S}}x_i=v_i\right).
\]
The set of points for which $\mbf{x}_{\fml{S}}=\mbf{v}_{\fml{S}}$ is
defined by
$\Upsilon(\fml{S};\mbf{v})=\{\mbf{x}\in\mbb{F}\,|\,\mbf{x}_{\fml{S}}=\mbf{v}_{\fml{S}}\}$.

\subsubsection{Distributions, expected value.}
Throughout the paper, it is assumed a \emph{uniform probability
distribution} on features $\fml{F}$, and such that all features are independent.
The \emph{expected value} of an ML model $\tau:\mbb{F}\to\mbb{T}$
is denoted by $\mbf{E}[\tau]$. 
Furthermore, let
$\exv[\tau(\mbf{x})\,|\,\mbf{x}_{\fml{S}}=\mbf{v}_{\fml{S}}]$
represent the expected of $\tau$ over points in feature space
consistent with the coordinates of $\mbf{v}$ dictated by $\fml{S}$.
For discrete-valued features,
$\exv[\tau(\mbf{x})\,|\,\mbf{x}_{\fml{S}}=\mbf{v}_{\fml{S}}]$ is as
follows:
\begin{equation} \label{eq:evdef}
  \exv[\tau(\mbf{x})\,|\,\mbf{x}_{\fml{S}}=\mbf{v}_{\fml{S}}]
  :=\sfrac{1}{|\Upsilon(\fml{S};\mbf{v})|}
  \sum\nolimits_{\mbf{x}\in\Upsilon(\fml{S};\mbf{v})}\tau(\mbf{x}).
\end{equation}

\jnoteF{%
  Introduce definitions of expected value for continuous-valued
  features.
}

For real-valued features,
$\exv[\tau(\mbf{x})\,|\,\mbf{x}_{\fml{S}}=\mbf{v}_{\fml{S}}]$ is as
follows:
\begin{equation} \label{eq:evdefint}
  \exv[\tau(\mbf{x})\,|\,\mbf{x}_{\fml{S}}=\mbf{v}_{\fml{S}}]
  :=\sfrac{1}{|\Upsilon(\fml{S};\mbf{v})|}
  \int_{\Upsilon(\fml{S};\mbf{v})}\tau(\mbf{x})d\mbf{x}.
\end{equation}

\jnoteF{
  Given $\mbf{z}\in\mbb{F}$ and$\fml{S}\subseteq\fml{F}$, let
  $\mbf{z}_{\fml{S}}$ represent the vector composed of the coordinates
  of $\mbf{z}$ dictated by $\fml{S}$.
  \[
  \exv{\kappa\,|\,\mbf{x}_{\fml{S}}=\mbf{v}_{\fml{S}}}
  :=\frac{1}{|\Upsilon(\fml{S};\mbf{v})|}
  \sum\nolimits_{\mbf{x}\in\Upsilon(\fml{S};\mbf{v})}\kappa(\mbf{x})
  \]
}

\subsubsection{Shapley values \& SHAP scores.}
%~\\
%
Shapley values were proposed in the context of game theory in the
early 1950s by L.\ S.\ Shapley~\cite{shapley-ctg53}. Shapley values
were defined given some set $\fml{S}$, and a \emph{characteristic
function}, i.e.\ a real-valued function defined on the subsets of
$\fml{S}$, $\cf:2^{\fml{S}}\to\mbb{R}$.%
\footnote{%
The original formulation also required super-additivity of the
characteristic function, but that condition has been relaxed in more
recent works~\cite{dubey-ijgt75,young-ijgt85}.}.
It is well-known that Shapley values represent the \emph{unique}
function that, given $\fml{S}$ and $\cf$, respects a number of
important axioms. More detail about Shapley values is available in
standard
references~\cite{shapley-ctg53,dubey-ijgt75,young-ijgt85,roth-bk88}.

In the context of explainability, Shapley values are most often
referred to as SHAP scores%
%, i.e. the application of Shapley values in 
%explainability
~\cite{kononenko-jmlr10,kononenko-kis14,lundberg-nips17,barcelo-aaai21,barcelo-jmlr23},
and consider a specific characteristic function
$\cf_e:2^{\fml{F}}\to\mbb{R}$,
which is defined by,
\begin{equation} \label{eq:cfs}
  \cf_e(\fml{S};\fml{E}) :=
  \exv[\tau(\mbf{x})\,|\,\mbf{x}_{\fml{S}}=\mbf{v}_{\fml{S}}].
\end{equation}
Thus, given a set $\fml{S}$ of features,
$\cf_e(\fml{S};\fml{E})$ represents the \emph{e}xpected value
of the classifier over the points of feature space represented by
$\Upsilon(\fml{S};\mbf{v})$.
%~\footnote{A more simplified notation for 
%$\frac{1}{\Pi_{i\in\fml{F}\setminus\fml{S}}|\mbb{D}_i|}\sum\nolimits_{\mbf{x}\in\Upsilon(\fml{S};\mbf{v})}\kappa(\mbf{x})$
%is $\mbf{E}[\kappa|_{\mbf{v}_{\fml{S}}}]$}.
%

\begin{table}[t]
  \centering
  \scalebox{1.0}{
  \renewcommand{\tabcolsep}{0.75em}
  \begin{tabular}{ccc} \toprule
    $\fml{S}$ & $\msf{rows}(\fml{S})$ & $\cfn{e}(\fml{S})$
    \\ \toprule
    $\emptyset$ & $1,2,3,4$ & $1-\alpha$ \\
    $\{1\}$ & $3,4$ & $1$ \\
    $\{2\}$ & $2,4$ & $1+\alpha$ \\
    $\{1,2\}$ & $4$ & $1$
    \\ \bottomrule
  \end{tabular}
}

  \caption{Expected values of $\tau_1$ for all possible sets $\fml{S}$
    of fixed features, given the target sample $((1,1),1)$.}
  \label{tab:avg01}
\end{table}

\begin{example}(Expected values for $\tau_1$.)
  For the explanation problem $\fml{E}_{1}$,
  the expected values of $\tau_1$ for all possible sets
  $\fml{S}$ of fixed features are shown in~\cref{tab:avg01}.
\end{example}

The formulation presented in earlier
work~\citep{barcelo-aaai21,barcelo-jmlr23} allows for different input
distributions when computing the expected values. For the purposes of
this paper, it suffices to consider solely a uniform input
distribution, and so the dependency on the input distribution is not
accounted for.
Independently of the distribution considered, it should be clear that
in most cases $\cfn{e}(\emptyset)\not=0$; this is the case for example
with boolean classifiers~\cite{barcelo-aaai21,barcelo-jmlr23}.

To simplify the notation, the following definitions are used,
\begin{align}
  \Delta_i(\fml{S};\fml{E},\cf) & :=
  \left(\cf(\fml{S}\cup\{i\})-\cf(\fml{S})\right),
  \label{eq:def:delta}
  \\[2pt]
  \varsigma(\fml{S}) & :=
  \sfrac{|\fml{S}|!(|\fml{F}|-|\fml{S}|-1)!}{|\fml{F}|!},
  \label{eq:def:vsigma}
\end{align}
(Observe that $\Delta_i$ is parameterized on $\fml{E}$ and $\cf$.)

Finally, let $\svn{E}:\fml{F}\to\mbb{R}$, i.e.\ the SHAP score for
feature $i$, be defined by,\footnote{%
Throughout the paper, the definitions of $\Delta_i$ and $\sv$ are
explicitly associated with the characteristic function used in their
definition.}.
\begin{equation} \label{eq:sv}
  \svn{E}(i;\fml{E},\cfn{e}):=\sum\nolimits_{\fml{S}\subseteq(\fml{F}\setminus\{i\})}\varsigma(\fml{S})\times\Delta_i(\fml{S};\fml{E},\cfn{e}).
\end{equation}
Given a sample $(\mbf{v},q)$, the SHAP score assigned to each
feature measures the \emph{contribution} of that feature with respect
to the prediction. 
From earlier work, it is understood that a positive/negative value
indicates that the feature can contribute to changing the prediction,
whereas a value of 0 indicates no
contribution~\citep{kononenko-jmlr10}.
%

%\jnote{To cover:
%  \cite{szegedy-iclr14,}.
%}

\subsubsection{Related work.} %\label{ssec:relw}
SHAP scores, i.e.\ the use of Shapley values in XAI, are being used 
ubiquitously in a wide range of practical domains, especially through
the use of the tool SHAP~\cite{lundberg-nips17}.
The practical limitations of SHAP scores, e.g.\ the results obtained
with the tool SHAP, have been documented in the
literature~\cite{blockbaum-aistats20,najmi-icml20,friedler-icml20,friedler-nips21}.
The theoretical limitations of SHAP scores, i.e.\ issues resulting
from existing definitions of SHAP scores and obtained independently of
the tool SHAP, were reported in recent work~\cite{hms-corr23a}, but
targeting only classification models.
More recent work proposed solutions for addressing the theoretical
limitations of SHAP
scores~\cite{ignatiev-corr23a,izza-corr23,ignatiev-corr23b,izza-aaai24}. However,
the proposed solutions do not represent SHAP scores.

To the best of our knowledge, the identification of theoretical
limitations of SHAP scores in the case of regression models, including
regression models that respect Lipschitz continuity has not been
investigated.

\section{Formal Explainability \& Adversarial Examples} \label{sec:fxai}

We will opt to define a \emph{similarity} predicate, which will enable
us to abstract away the details of whether the ML model relates with
classification or regression.

\subsubsection{Similarity predicate.}
%~\\
%
Given an ML model and some input $\mbf{x}$, the output of the ML model
is \emph{distinguishable} with respect to the sample $(\mbf{v},q)$ if the
observed change in the model's output is deemed sufficient; 
otherwise it is \emph{similar} (or indistinguishable).
This is represented by a \emph{similarity} predicate (which can be
viewed as a boolean function) 
$\similar:\mbb{F}\to\{\bot,\top\}$ (where $\bot$ signifies
\emph{false}, and $\top$ signifies \emph{true}).%
\footnote{%
For simplicity, and with a minor abuse of notation, when $\similar$
is used in a scalar context, it is interpreted as a boolean function,
i.e.\ $\similar:\mbb{F}\to\{0,1\}$, with 0 replacing $\bot$ and 1 
replacing $\top$.}
Concretely, given $\delta\in\mbb{R}$, $\similar(\mbf{x};\fml{E})$ holds true iff the change in
the ML model output is deemed \emph{insufficient} and so no observable
difference exists between the ML model's output for $\mbf{x}$ and
$\mbf{v}$ by a factor of $\delta$.
\footnote{
Throughout the paper, parameterizations are shown after the separator
';', and will be elided when clear from the context.}

For regression problems, given a change in the input from $\mbf{v}$ to
$\mbf{x}$, a change in the output is indistinguishable (i.e.\ the
outputs are similar) if,
\[
\similar(\mbf{x};\fml{E}) := [|\rho(\mbf{x})-\rho(\mbf{v})|\le\delta],
\]
otherwise, it is distinguishable.
\footnote{%
Exploiting a threshold to decide whether there exists an observable
change has been used in the context of adversarial
robustness~\cite{barrett-nips23}. Furthermore, the relationship
between adversarial examples and explanations is
well-known~\cite{inms-nips19,barrett-nips23}.}

For classification problems, similarity is defined to equate with not
changing the predicted class, in which case the parameter $\delta$ is always 0.
Given a change in the input from 
$\mbf{v}$ to $\mbf{x}$, a change in the output is indistinguishable
(i.e.\ the outputs are similar) if,
\[
\similar(\mbf{x};\fml{E}) := [|\kappa(\mbf{x})-\kappa(\mbf{v})|\le0],
\]
otherwise, it is distinguishable. Alternatively, it can be represented as
\[
\similar(\mbf{x};\fml{E}) := [\kappa(\mbf{x})=\kappa(\mbf{v})].
\]
(As shown in the remainder of this
paper, $\similar$ allows abstracting away whether the the underlying
model implements classification or regression.)

In the remainder of the paper, we will be computing the expected value
of the similarity predicate $\sigma$. In those situations, the
similarity predicate will be interpreted as a boolean function, where
$\bot$ corresponds to 0, and $\top$ corresponds to 1. (Clearly, we
could use additional notation to avoid this minor abuse of notation,
but opt instead to keep the notation as simple as possible.)
When $\sigma$ is interpreted as a boolean function it can be viewed as
a regression model that predicts one of two possible values, i.e.\ 0
and 1; hence, we can compute the expected value of $\sigma$ using
either~\eqref{eq:evdef} or~\eqref{eq:evdefint}.

\subsubsection{Adversarial examples.}
%~\\
%
Adversarial examples serve to reveal the brittleness of ML
models~\cite{szegedy-iclr14,szegedy-iclr15}. Adversarial robustness
indicates the absence of adversarial examples. The importance of
deciding adversarial robustness is illustrated by a wealth of
competing alternatives~\cite{johnson-sttt23}.

Given a sample $(\mbf{v},q)$, and a norm $l_p$, a point
$\mbf{x}\in\mbb{F}$ is an \emph{adversarial example} (AEx) if the prediction
for $\mbf{x}$ is distinguishable from that for $\mbf{v}$. Formally, we
write,
\[
\aex(\mbf{x};\fml{E}) :=
\left(||\mbf{x}-\mbf{v}||_{l_p}\le\epsilon\right)\land
\neg\similar(\mbf{x};\fml{E})
\]
where the $l_p$ distance between the given point $\mbf{v}$ and other
points of interest is restricted to $\epsilon>0$.
Moreover, we define a \emph{constrained} adversarial example, such
that the allowed set of points is given by the predicate
$\mbf{x}_{\fml{S}}=\mbf{v}_{\fml{S}}$. Thus,
\[
\aex(\mbf{x},\fml{S};\fml{E}) :=
\left(||\mbf{x}-\mbf{v}||_{l_p}\le\epsilon\right)\land
\left(\mbf{x}_{\fml{S}}=\mbf{v}_{\fml{S}}\right)\land
\neg\similar(\mbf{x};\fml{E})
\]
Additionally, we use $l_p$-minimal AExs to refer to AExs
having the minimal distance measured by the norm $l_p$
around the given point $\mbf{v}$.

\begin{example}(AExs for $\fml{E}_{1}$.)
  We consider the norm $l_0$, i.e.\ the Hamming distance.
  For the explanation problem $\fml{E}_{1}$,
  there is a distance 1 AEx, i.e.\ by changing feature
  1 to a value other than 1, the prediction changes to $1+2\alpha$,
  this assuming the value of feature 2 is also 1.
  Moreover, if feature 1 is fixed, i.e. $1\in\fml{S}$, then model does
  not have \emph{any} AEx.
  It is also plain that feature 2 does not occur is any $l_0$-minimal
  AEx for $\fml{E}_{1}$.
\end{example}

\subsubsection{Abductive and contrastive explanations.}
Abductive and contrastive explanations (AXps/CXps)
represent the two examples of formal explanations for
classification
problems~\cite{kutyniok-jair21,msi-aaai22,darwiche-lics23}. This paper
studies a generalized formulation that also encompasses regression
problems.

A weak abductive explanation (WAXp) denotes a set of features
$\fml{S}\subseteq\fml{F}$, such that for every point in feature space
the ML model output is \emph{similar} to the given sample: $(\mbf{v},q)$.
The condition for a set of features to represent a WAXp (which also
defines a corresponding predicate $\waxp$) is as follows:
\begin{equation}
  \waxp(\fml{S};\fml{E}) :=
  \exv[\similar(\mbf{x};\fml{E})\,|\,\mbf{x}_{\fml{S}}=\mbf{v}_{\fml{S}}]
  = 1.
\end{equation}
Moreover, an AXp is a subset-minimal WAXp, that is,
\begin{equation} \label{eq:axp}
\axp(\fml{S};\fml{E}) := \waxp(\fml{S};\fml{E})\land\forall(t\in\fml{S}).\neg\waxp(\fml{S}\setminus\{t\};\fml{E}).
\end{equation}

A weak contrastive explanation (WCXp) denotes a set of features
$\fml{S}\subseteq\fml{F}$, such that there exists some point in
feature space, where only the features in $\fml{S}$ are allowed to
change, that makes the ML model output distinguishable from the given
sample $(\mbf{v},q)$.
The condition for a set of features to represent a WCXp (which also
defines a corresponding predicate $\wcxp$) is as follows:
\begin{equation}
  \wcxp(\fml{S};\fml{E}) :=
  \exv[\similar(\mbf{x};\fml{E})\,|\,\mbf{x}_{\fml{F}\setminus\fml{S}}=\mbf{v}_{\fml{F}\setminus\fml{S}}]
  < 1.
\end{equation}
Moreover, a CXp is a subset-minimal WCXp, that is,
\begin{equation} \label{eq:cxp}
\cxp(\fml{S};\fml{E}) := \wcxp(\fml{S};\fml{E})\land\forall(t\in\fml{S}).\neg\wcxp(\fml{S}\setminus\{t\};\fml{E}).
\end{equation}

\begin{example}(AXps/CXps for $\fml{E}_{1}$.)
  For the explanation problem $\fml{E}_{1}$, let us first consider a
  classification model, i.e.\ all $\tau_1(\mbf{x})$ are integer values,
  each denoting a class. \\
  If feature 1 is fixed to value 1, then the prediction is 1, and so
  the expected value of the similarity predicate is also 1. Otherwise,
  if the value of feature 1 changes to a value other than 1, then the
  prediction is some value other than 1. Thus, the similarity
  predicate is not 1 on all points of feature space, and so its
  expected value is less than 1.
  As result, it is immediate that $\{1\}$ is one (and the only) AXp,
  whereas $\{1\}$ is the one (and also the only) CXp. \\
  Now let us consider that $\fml{M}_{1}$ implements a regression function,
  e.g.\ some $\tau_1(\mbf{x})$ are real value. In this case, we need to
  specify a value of $\delta$ for the similarity predicate. Clearly,
  the value of $\delta$ cannot be arbitrarily large; otherwise we
  would be unable to distinguish 1 from the other values. For example,
  for $\alpha>0$, we can pick a value of $\delta$ no larger than
  $\alpha$. \\
  In this case, the sets of AXps and CXps remains unchanged.
  The computation of AXps/CXps for the explanation problem $\fml{E}_{1}$
  are summarized in~\cref{tab:xp01}.
\end{example}

\jnoteF{Show table summarizing the computation of AXps/CXps.}

\begin{table}[t]
\centering

\begin{tabular}{ccccc} \toprule
$\fml{S}$ & $\fml{F}\setminus\fml{S}$ & $\msf{rows}(\fml{S})$ & $\waxp(\fml{S})?$ & $\wcxp(\fml{F}\setminus\fml{S})?$
\\ \toprule
$\emptyset$ & $\{1,2\}$ & $1,2,3,4$ & No & Yes \\
$\{1\}$ & $\{2\}$ & $3,4$ & Yes & No \\
$\{2\}$ & $\{1\}$ & $2,4$ & No & Yes \\
$\{1,2\}$ & $\emptyset$ & $4$ & Yes & No
\\ \bottomrule
\end{tabular}

\caption{WAXps/WCXps of $\fml{E}_{1}$ for all possible sets $\fml{S}$
of fixed features.}
\label{tab:xp01}
\end{table}

One can prove that a set of features is an AXp iff it is a minimal
hitting set of the set of CXps, and vice-versa~\cite{msi-aaai22}.
(Although this result has been proved for classification problems, the
use of the similarity predicate generalizes the result also to
regression problems.)

%An adversarial example is an assignment of values to features that
%causes the output of the ML model to be distinguished from some given
%sample; the features that change value (with respect to $\mbf{v}$) in
%an adversarial example represent a WCXp.

%Given the previous definitions, it is plain the following result.
%\begin{proposition}
%  $\exists(\mbf{x}\in\mbb{F}).\left(\aex(\mbf{x},\fml{F}\setminus\fml{S};\fml{E})\right)$
%  iff
%  $\wcxp(\fml{S};\fml{E})$, i.e.\ there exists a constrained
%  adversarial example with the features $\fml{F}\setminus\fml{S}$ iff
%  the set $\fml{S}$ is a weak CXp.
%\end{proposition}

By examining the definitions of AEx and WCXp, it is straightforward to
prove that there exists a constrained
AEx with the features $\fml{F}\setminus\fml{S}$ iff
the set $\fml{S}$ is a weak CXp.

\paragraph{Feature (ir)relevancy.}
%~\\
%
The set of features that are included in at least one (abductive) 
explanation are defined as follows:
\begin{equation}
  \mathfrak{F}(\fml{E}):=\{i\in\fml{X}\,|\,\fml{X}\in2^{\fml{F}}\land\axp(\fml{X})\},
\end{equation}
where predicate $\axp(\fml{X})$ holds true iff $\fml{X}$ is an AXp.
(A well-known result is that $\mathfrak{F}(\fml{E})$ remains unchanged
if CXps are used instead of AXps~\cite{msi-aaai22}.) %inams-aiia20
Finally, a feature $i\in\fml{F}$ is \emph{irrelevant}, i.e.\ predicate
$\irrelevant(i)$ holds true, if $i\not\in\mathfrak{F}(\fml{E})$;
otherwise feature $i$ is \emph{relevant}, and predicate $\relevant(i)$
holds true. 
Clearly, given some explanation problem $\fml{E}$,
$\forall(i\in\fml{F}).\irrelevant(i)\leftrightarrow\neg\relevant(i)$.
%~%
%\footnote{As noted earlier, the parameterization on $\fml{E}$ is elided.}.

\begin{example}((Ir)relevant features for $\fml{E}_{1}$.)
  Given that $\{\{1\}\}$ denotes the set of AXps (and of CXps), it is
  immediate that feature 1 is relevant, and feature 2 is irrelevant.
\end{example}

%\section{Limitations of SHAP Scores} \label{sec:issues}
%\section{The Failings of SHAP Scores} \label{sec:issues}
%\section{Expanding the Limitations of SHAP Scores}
%\section{Inflating the Limitations of SHAP Scores}
%\section{Novel Limitations of SHAP Scores}
\section{New Limitations of SHAP Scores}
\label{sec:issues}

The recent proposal of polynomial-time algorithms for the exact
computation of SHAP
scores~\cite{barcelo-aaai21,vandenbroeck-aaai21,vandenbroeck-jair22,barcelo-jmlr23}
has facilitated their more rigorous assessment.
Building on those efficient algorithms for computing SHAP scores,
recent work uncovered examples of classifiers for which the computed
(exact) scores are thoroughly unsatisfactory~\cite{hms-corr23a}.

\begin{table}[t]
  \begin{tabular}{cccccc} \toprule
  \multicolumn{6}{c}{$i=1$} \\
  \toprule
  $\fml{S}$ & $\cf_e(\fml{S})$ & $\cf_e(\fml{S}\cup\{1\})$ &
  $\Delta_1(\fml{S})$ & $\varsigma(\fml{S})$ &
  $\varsigma(\fml{S})\times\Delta_1(\fml{S})$ \\
  \toprule
  $\emptyset$ & $1-\alpha$ & 1 & $\alpha$ & $\nfrac{1}{2}$ &
  $\nfrac{\alpha}{2}$ \\
  $\{2\}$ & $1+\alpha$ & 1 & $-\alpha$ & $\nfrac{1}{2}$ &
  $-\nfrac{\alpha}{2}$ \\
  \midrule
  \multicolumn{5}{r}{$\svn{E}(1)~~=$} & \multicolumn{1}{c}{0} \\
  \toprule
  \multicolumn{6}{c}{$i=2$} \\
  \toprule
  $\fml{S}$ & $\cfn{e}(\fml{S})$ & $\cfn{e}(\fml{S}\cup\{2\})$ &
  $\Delta_2(\fml{S})$ & $\varsigma(\fml{S})$ &
  $\varsigma(\fml{S})\times\Delta_2(\fml{S})$ \\
  \toprule
  $\emptyset$ & $1-\alpha$ & $1+\alpha$ & $2\alpha$ & $\nfrac{1}{2}$
  & $\alpha$ \\
  $\{1\}$ & 1 & 1 & 0 & $\nfrac{1}{2}$ & 0 \\
  \midrule
  \multicolumn{5}{r}{$\svn{E}(2)~~=$} & \multicolumn{1}{c}{$\alpha$} \\
  \bottomrule
\end{tabular}

  \caption{Computation of SHAP scores for $\fml{E}_{1}$.}
  \label{tab:svs01}
\end{table}

%%%%% Classification with Boolean domains and Real-valued domains %%%%%

\subsection{Classification -- Boolean Domains} \label{ssec:bresb}

This section proves that for arbitrary large numbers of variables,
there exist boolean functions and samples for which the SHAP scores
exhibit the issues reported in recent work~\citep{hms-corr23a}.
The detailed proofs are included in the supplemental materials.
Given a classifier $\fml{M}$, with sample $(\mbf{v},c)$, and with
$i,i_1,i_2\in\fml{F}$, the issues are shown in~\cref{ijar:issues:tab}.
\begin{table*}[t]
\centering
\begin{tabular}{cl} \toprule
Issue & Condition \\
\toprule
I1 &
$\irrelevant(i)\land\left(\svn{E}(i)\not=0\right)$
\\[2.5pt]
I2 &
$\irrelevant(i_1)\land\relevant(i_2)\land\left(|\svn{E}(i_1)|>|\svn{E}(i_2)|\right)$
\\[2.5pt]
I3 &
$\relevant(i)\land\left(\svn{E}(i)=0\right)$
\\[2.5pt]
I4 &
$[\irrelevant(i_1)\land\left(\svn{E}(i_1)\not=0\right)]\land[\relevant(i_2)\land\left(\svn{E}(i_2)=0\right)]$
\\[2.5pt]
I5 &
$[\irrelevant(i)\land\forall_{1\le{j}\le{m},j\not=i}\left(|\svn{E}(j)|<|\svn{E}(i)|\right)]$
\\[2.5pt]
I6 &
$[\irrelevant(i_1)\land\relevant(i_2)\land(\svn{E}(i_1)\times\svn{E}(i_2)>0)]$
\\[2.5pt]
\bottomrule
\end{tabular}
\caption{Issues with SHAP scores. Free variables, $i,i_1,i_2$ are
quantified existentially over the set of features
$\fml{F}$. The occurrence of some issues implies the occurrence of other issues.
I2 is implied by the occurrence I4 and I5.}
\label{ijar:issues:tab}
\end{table*}

Throughout this section, let $m$ be the number of variables of
the functions we start from, and let $n$ denote the number of
variables of the functions we will be constructing. In this case, we
set $\fml{F}=\{1,\ldots,n\}$.
Furthermore, for the sake of simplicity, 
we opt to introduce the new features as the last features (e.g., feature $n$).
Besides, we opt to set the values of these additional features to 1 in the sample $(\mbf{v}, c)$
that we intend to explain, that is, $v_n=1$.
When considering two additional features, feature $n$ and feature $n-1$,
we will assume that feature $n$ is irrelevant while feature $n-1$ is relevant.
This choice does not affect the proof's argument in any way.

For a boolean function $\kappa$,
we use $\kappa_0$ to denote the conditioning of the function $\kappa$ on $x_n=0$ (i.e. $\kappa|_{x_n=0}$),
and $\kappa_1$ to denote the conditioning of the function $\kappa$ on $x_n=1$.
Besides, we use $\kappa_{00}$ to denote the conditioning of the function $\kappa$ on $x_n=0$ and $x_{n-1}=0$ (i.e. $\kappa|_{x_n=0,x_{n-1}=0}$),
$\kappa_{01}$ for the conditioning on $x_n=0$ and $x_{n-1}=1$,
$\kappa_{10}$ for the conditioning on $x_n=1$ and $x_{n-1}=0$,
and $\kappa_{11}$ for the conditioning on $x_n=1$ and $x_{n-1}=1$.
Moreover, under a uniform input distribution, the following equations hold in general.
\begin{align} \label{eq:exvdec01}
\exv[\kappa] &= \frac{1}{2}\cdot\exv[\kappa_0] + \frac{1}{2}\cdot\exv[\kappa_1],
\end{align}
\begin{align} \label{eq:exvdec02}
\Delta_{n}(\fml{S};\fml{E},\cfn{e})
= \frac{1}{2} \cdot ( \exv[\kappa_1|\mbf{x}_{\fml{S}}=\mbf{v}_{\fml{S}}] - \exv[\kappa_0|\mbf{x}_{\fml{S}}=\mbf{v}_{\fml{S}}] ).
\end{align}
By choosing different functions for $\kappa_{00}$, $\kappa_{01}$,
$\kappa_{10}$ and $\kappa_{11}$, we are able to construct functions
$\kappa$ exhibiting the issues reported in~\cref{ijar:issues:tab}.

\begin{restatable}{proposition}{PropIRR}%
  \label{prop:irr}%
  For any $n\ge3$, there exist boolean functions defined on $n$
  variables, and at least one sample, which exhibit an
  issue~I1, i.e. there exists an irrelevant
  feature $i\in\fml{F}$, such that $\svn{E}(i)\not=0$.
\end{restatable}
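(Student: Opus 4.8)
The plan is to construct, for each $n\ge3$, a boolean function $\kappa$ on variables $\{1,\ldots,n\}$ together with a sample $(\mbf{v},c)$ such that feature $n$ is irrelevant (it appears in no AXp) yet $\svn{E}(n;\fml{E},\cfn{e})\ne 0$. Following the setup already fixed in the section, I would take feature $n$ to be the newly added feature, set $v_n=1$, and build $\kappa$ from two functions $\kappa_0=\kappa|_{x_n=0}$ and $\kappa_1=\kappa|_{x_n=1}$ on the first $n-1$ variables. The key design constraint for irrelevance of feature $n$ is that fixing feature $n$ must never be \emph{necessary} to preserve similarity: concretely, I would arrange that $\kappa_1$ already admits a weak AXp over $\fml{F}\setminus\{n\}$ that is also a weak AXp of $\kappa$ (so no AXp of $\kappa$ needs feature $n$), while simultaneously the \emph{expected values} $\exv[\kappa_0\mid\mbf{x}_{\fml{S}}=\mbf{v}_{\fml{S}}]$ and $\exv[\kappa_1\mid\mbf{x}_{\fml{S}}=\mbf{v}_{\fml{S}}]$ differ on enough subsets $\fml{S}$ that the weighted sum in~\eqref{eq:exvdec02}, namely $\svn{E}(n)=\sum_{\fml{S}\subseteq\fml{F}\setminus\{n\}}\varsigma(\fml{S})\cdot\frac12(\exv[\kappa_1\mid\ldots]-\exv[\kappa_0\mid\ldots])$, does not telescope to zero.

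The simplest realization I would try first is to let the target prediction be $c=\kappa(\mbf{v})=1$, make $\kappa_1\equiv 1$ be the constant-$1$ function (so that $\{n\}$-free similarity is trivially guaranteed once $x_n=1$ — hence feature $n$ irrelevant, and in fact every nonempty $\fml{S}$ not containing $n$ is a weak AXp only vacuously; I would need to double-check the irrelevance bookkeeping here, see below), and choose $\kappa_0$ to be a nonconstant function whose conditional expectations are asymmetric. Then~\eqref{eq:exvdec02} reduces to $\Delta_n(\fml{S})=\frac12(1-\exv[\kappa_0\mid\mbf{x}_{\fml{S}}=\mbf{v}_{\fml{S}}])$, and $\svn{E}(n)=\frac12\sum_{\fml{S}}\varsigma(\fml{S})(1-\exv[\kappa_0\mid\ldots])=\frac12-\frac12\,\svn{E}^{\kappa_0\text{-avg}}$; since the $\varsigma$-weights sum to $1$ over all $\fml{S}\subseteq\fml{F}\setminus\{n\}$, this is $\frac12(1-\overline{\kappa_0})$ where $\overline{\kappa_0}$ is a convex combination of the conditional means of $\kappa_0$, which is strictly less than $1$ as soon as $\kappa_0$ is not identically $1$. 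So taking, e.g., $\kappa_0=x_1\wedge x_2\wedge\cdots\wedge x_{n-1}$ with $v_1=\cdots=v_{n-1}=1$ gives $\svn{E}(n)>0$ immediately. (A small variant with $\kappa_0$ chosen so that some conditional means exceed $1$... — impossible for boolean, so the sign is forced, which is fine.)

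The one genuinely delicate point — and the step I expect to be the main obstacle — is establishing \emph{irrelevance} of feature $n$ rigorously, i.e.\ that $n\notin\mathfrak{F}(\fml{E})$. With $\kappa_1\equiv1$ the model restricted to $x_n=1$ is constant, so similarity holds on all of $\Upsilon(\{n\};\mbf{v})$, which suggests $\{n\}$ itself is a weak AXp; the subtlety is whether feature $n$ nonetheless sneaks into some \emph{subset-minimal} WAXp. I would argue that because $\kappa_1$ is constant, for any $\fml{S}$ with $n\in\fml{S}$ that is a WAXp, $\fml{S}\setminus\{n\}$ is \emph{not necessarily} a WAXp (it may fail when $x_n=0$), so one must be careful: the right move is instead to exhibit directly a WAXp $\fml{S}^\star\subseteq\fml{F}\setminus\{n\}$ and show that the set of \emph{all} AXps is contained in $2^{\fml{F}\setminus\{n\}}$. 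For the concrete choice $\kappa=(x_n\wedge 1)\vee(\neg x_n\wedge x_1\wedge\cdots\wedge x_{n-1})$, one checks the function equals $1$ exactly when $x_n=1$ or all of $x_1,\ldots,x_{n-1}$ equal $1$; with $\mbf{v}=(1,\ldots,1)$ the set $\{1,\ldots,n-1\}$ is a WAXp not containing $n$, and since the only other way to force output $1$ from a subcube through $\mbf{v}$ requires fixing $x_n=1$, one shows every minimal WAXp is $\{1,\ldots,n-1\}$ or a subset thereof, none containing $n$; hence $n$ is irrelevant while $\svn{E}(n)=\frac12(1-2^{-(n-1)})>0$, establishing issue~I1. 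I would present the general construction and defer the routine expected-value computations and the AXp enumeration to the supplement as the paper indicates.
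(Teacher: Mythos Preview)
Your construction does not work: with $\kappa_1\equiv 1$ and $\kappa_0$ not identically $1$, feature $n$ is \emph{relevant}, not irrelevant. Fixing $x_n=v_n=1$ forces $\kappa(\mbf{x})=\kappa_1(\mbf{x}_{1..n-1})=1$ for every $\mbf{x}$, so $\{n\}$ is a WAXp; and since $\kappa_0\not\equiv 1$ (which you need precisely to make $\svn{E}(n)\ne 0$), the empty set is not a WAXp, hence $\{n\}$ is subset-minimal and therefore an AXp, i.e.\ $n\in\mathfrak{F}(\fml{E})$. Your concluding sentence is self-defeating: you acknowledge that ``the only other way to force output $1$ \ldots\ requires fixing $x_n=1$'' and then assert that ``every minimal WAXp is $\{1,\ldots,n-1\}$ or a subset thereof, none containing $n$''---but the first clause exhibits $\{n\}$ itself as a WAXp. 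More generally, whenever the branch $\kappa|_{x_n=v_n}$ is the constant equal to the target class, $\{n\}$ is automatically an AXp unless the whole classifier is constant; so taking one branch constant can never yield an irrelevant feature with nonzero SHAP score.

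The paper's proof avoids this trap by keeping both branches non-constant and, crucially, arranging that $\kappa_0$ and $\kappa_1$ have \emph{identical sets of CXps} (equivalently, identical AXps) at the projected point $\mbf{v}_{1..m}$. Concretely it takes the prediction $c=0$, sets $\kappa_0=\kappa_1\lor f$ with $\kappa_1\land f=0$, and requires that the extra $1$-points contributed by $f$ create no new minimal CXp. Under that condition, any putative AXp $\fml{X}\ni n$ of $\kappa$ yields a WAXp $\fml{X}\setminus\{n\}$ that works simultaneously on both branches (because their AXp structure coincides), contradicting minimality of $\fml{X}$; meanwhile $\Delta_n(\fml{S})=-\tfrac{1}{2}\,\exv[f\mid\mbf{x}_{\fml{S}}=\mbf{v}_{\fml{S}}]\le 0$ with strict inequality for some $\fml{S}$, so $\svn{E}(n)<0$. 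The ``same CXps on both branches'' condition is the missing ingredient in your attempt.
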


\begin{restatable}{proposition}{PropREL}%
  \label{prop:rel}%
  For any odd $n\ge3$, there exist boolean functions defined on $n$
  variables, and at least one sample, which exhibits an
  issue~I3, i.e. there exists a
  relevant feature $i\in\fml{F}$, such that $\svn{E}(i)=0$.
\end{restatable}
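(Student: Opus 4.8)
The plan is to exhibit, for each odd $n\ge 3$, one explicit boolean function together with the all-ones sample, built from a small fixed gadget padded by a parity of fresh features, and to show that the last feature is relevant yet has SHAP score $0$.

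First I would analyse a three-variable gadget $g(y_1,y_2,y_3):=(y_1\oplus y_2)\wedge(y_1\oplus y_3)$ (equivalently, $g(\mbf{y})=1$ iff $y_2=y_3\neq y_1$) relative to the sample $(1,1,1)$, where $g(1,1,1)=0$. An eight-row calculation yields two facts. (a) For the third coordinate, the marginal contributions $\Delta_3(\fml{T}):=\cfn{e}(\fml{T}\cup\{3\})-\cfn{e}(\fml{T})$ over $\fml{T}\subseteq\{1,2\}$ are $0$, $-\tfrac14$, $+\tfrac14$, $0$ for $\fml{T}=\emptyset,\{1\},\{2\},\{1,2\}$; since $\varsigma(\{1\})=\varsigma(\{2\})$, the two nonzero terms cancel and coordinate $3$ has SHAP score $0$. (b) Coordinate $3$ is relevant, because $\{2,3\}$ is a contrastive explanation: fixing $y_1=1$ one reads $g(1,0,0)=1\neq g(1,1,1)$, so $\wcxp(\{2,3\})$ holds, while $g(1,\cdot,1)\equiv 0$ and $g(1,1,\cdot)\equiv 0$ show that neither $\{2\}$ nor $\{3\}$ is a weak contrastive explanation, so $\{2,3\}$ is subset-minimal.

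Next I would lift the gadget to $n$ variables with
\[
\kappa(x_1,\dots,x_n):=g(x_{n-2},x_{n-1},x_n)\;\oplus\;x_1\oplus x_2\oplus\cdots\oplus x_{n-3},
\]
take $\mbf{v}=(1,\dots,1)$ and $c=\kappa(\mbf{v})$; because $n-3$ is even, the parity of the padding features at $\mbf{v}$ is $0$, hence $c=g(1,1,1)=0$. The witness will be feature $n$. For $\svn{E}(n)=0$ I would split the sets $\fml{S}\subseteq\fml{F}\setminus\{n\}$: if some padding feature $j\le n-3$ is not in $\fml{S}$, then flipping $x_j$ flips $\kappa$, so $\cfn{e}(\fml{S})=\cfn{e}(\fml{S}\cup\{n\})=\tfrac12$ and $\Delta_n(\fml{S})=0$; otherwise $\fml{S}=\fml{T}\cup\{1,\dots,n-3\}$ with $\fml{T}\subseteq\{n-2,n-1\}$, and on $\Upsilon(\fml{S};\mbf{v})$ the padding parity is constantly $0$, so $\kappa$ coincides with $g$ and $\Delta_n(\fml{S})$ equals the gadget value $\Delta_3(\fml{T})$ of~(a). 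Since $\varsigma(\fml{T}\cup\{1,\dots,n-3\})$ depends only on $|\fml{T}|$, the two singleton terms again cancel and the $\emptyset$ and $\{n-2,n-1\}$ terms vanish, giving $\svn{E}(n)=0$. Relevance of feature $n$ follows from the same observation: on $\Upsilon(\fml{F}\setminus\{n-1,n\};\mbf{v})$ the padding parity is $0$, so $\kappa$ reduces to $g(1,x_{n-1},x_n)$, and fact~(b) shows $\{n-1,n\}$ is a CXp containing feature $n$, i.e.\ $n\in\mfk{F}(\fml{E})$.

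The main obstacle is the relevance half: the argument must be carried out in the $n$-variable model, not in the gadget, and one must check that padding did not change which feature sets are (weak) contrastive explanations. This reduces cleanly because every spectator feature is pinned to $1$ in the restrictions that matter and contributes a constant $0$ to the XOR; the same remark is exactly what collapses the case split in the SHAP computation, so everything comes down to the three-variable base case together with the triviality that $\varsigma$ is constant on subsets of a given size. Finally, since $n$ ranges over all odd integers $\ge 3$ and distinct $n$ give distinct functions, this yields arbitrarily many witnesses, as required.
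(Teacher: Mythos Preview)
Your proof is correct. The eight-row gadget calculation checks out, the XOR padding makes every $\Delta_n(\fml{S})$ with a free padding feature vanish (both sides equal $\tfrac12$), and the four surviving terms cancel pairwise because $\varsigma$ depends only on the size of $\fml{S}$. The relevance argument via the CXp $\{n-1,n\}$ is clean: once the padding features and $x_{n-2}$ are pinned to $1$, the parity contributes $0$ and the model collapses to $g(1,x_{n-1},x_n)$, so the three-variable analysis applies verbatim.

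Your route is genuinely different from the paper's. The paper does not fix an explicit gadget; instead it takes any non-constant $m$-variable function $\kappa_0$, places an isomorphic copy $\kappa_1$ on a disjoint block of $m$ fresh features, and uses $x_n$ as a switch selecting which copy is read. The SHAP score of the switch is shown to be $0$ by an involution on subsets: for each $\fml{S}\subseteq\fml{F}\setminus\{n\}$ one swaps features between the two blocks to obtain a mirror $\fml{S}'$ of the same size with $\Delta_n(\fml{S})=-\Delta_n(\fml{S}')$. Relevance of $n$ is argued by observing that fixing the ``active'' block plus $n$ is a WAXp while dropping $n$ is not. What the paper buys is generality and modularity: any choice of $\kappa_0$ works, so one gets a large parametric family of witnesses and a conceptual cancellation argument. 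What your construction buys is concreteness and economy: a single explicit formula per $n$, the whole verification reduces to an eight-row truth table and the trivial balance property of XOR, and there is no need to reason about an abstract isomorphism between blocks.
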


\begin{restatable}{proposition}{PropDisorder}%
  \label{prop:disorder}%
  For any even $n\ge4$, there exist boolean functions defined on $n$
  variables, and at least one sample, which exhibits an
  issue~I4, i.e. there exists an
  irrelevant feature $i_1\in\fml{F}$, such that $\svn{E}(i_1)\neq0$, and a
  relevant feature $i_2\in\fml{F}\setminus\{i_1\}$, such that
  $\svn{E}(i_2)=0$.
\end{restatable}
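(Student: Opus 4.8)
The plan is to mimic the constructions used for Propositions~\ref{prop:irr} and~\ref{prop:rel}, but now combining both phenomena simultaneously into a single function on an even number $n\ge4$ of variables. Concretely, I would take an even $n$, designate feature $n$ as the candidate irrelevant feature $i_1$ and feature $n-1$ as the candidate relevant feature $i_2$, and build a boolean function $\kappa$ by specifying the four cofactors $\kappa_{00},\kappa_{01},\kappa_{10},\kappa_{11}$ on the remaining $n-2$ variables. The target sample is $\mbf{v}$ with $v_n=v_{n-1}=1$ and the prediction $c=\kappa(\mbf{v})=\kappa_{11}(\mbf{v})$. The goal of the cofactor choice is threefold: (i) feature $n$ is genuinely irrelevant, i.e.\ belongs to no AXp (equivalently, no CXp), which one checks via the similarity/WCXp characterization — it suffices that fixing all features other than $n$ forces the output to stay similar to $c$, and moreover that $n$ can always be dropped from any WAXp; (ii) feature $n-1$ is relevant, i.e.\ occurs in some AXp/CXp; (iii) the SHAP score arithmetic comes out as $\svn{E}(n)\neq0$ and $\svn{E}(n-1)=0$.

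For the SHAP computation I would lean on the decomposition identities~\eqref{eq:exvdec01} and~\eqref{eq:exvdec02}. For feature $n$, $\Delta_n(\fml{S})=\tfrac12(\exv[\kappa_1\mid\mbf{x}_{\fml{S}}=\mbf{v}_{\fml{S}}]-\exv[\kappa_0\mid\mbf{x}_{\fml{S}}=\mbf{v}_{\fml{S}}])$, so one wants $\kappa_0$ and $\kappa_1$ to agree on the "relevant" behavior (so that $n$ is irrelevant) yet differ in expectation over the conditioning sets in a way that does not cancel when weighted by $\varsigma(\fml{S})$ and summed — exactly the mechanism already exploited in Proposition~\ref{prop:irr}, so I would reuse that gadget for the pair $(\kappa_0,\kappa_1)$. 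For feature $n-1$, I would arrange the cofactors $\kappa_{00},\kappa_{01}$ versus $\kappa_{10},\kappa_{11}$ so that, by the same cancellation argument as in Proposition~\ref{prop:rel} (which is why the parity hypothesis matters: the odd-$n$ construction there lives on the $n-1$ remaining variables when we peel off feature $n$, and $n-1$ is odd precisely when $n$ is even), the weighted sum over $\fml{S}\subseteq\fml{F}\setminus\{n-1\}$ telescopes to $0$ while $n-1$ still sits in a CXp. The cleanest route is to let $\kappa$ restricted to $x_n=1$ be (a relabeled copy of) the function witnessing Proposition~\ref{prop:rel} on variables $\{1,\dots,n-1\}$ with $n-1$ playing the role of its distinguished relevant-but-zero-score feature, and to choose $\kappa|_{x_n=0}$ so that (a) $n$ stays irrelevant, (b) $\svn{E}(n)$ picks up a nonzero contribution from the $\emptyset$ and singleton terms, and (c) the symmetry that kills $\svn{E}(n-1)$ inside the $x_n=1$ slice is preserved — ideally by making $\kappa|_{x_n=0}$ itself symmetric in the relevant pair or a constant adjusted to break only the $n$-score.

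The two things I would verify carefully are the relevancy/irrelevancy claims (via the WAXp/WCXp predicates of Section~\ref{sec:fxai} rather than by inspecting SHAP scores) and the exact vanishing of $\svn{E}(n-1)$. For irrelevancy of $n$: show $\waxp(\fml{F}\setminus\{n\};\fml{E})$ holds and that $n$ can be removed from any WAXp, equivalently that $\{n\}$ is not contained in any CXp; the constrained-AEx characterization stated in the excerpt makes this a finite check on the cofactors. For relevancy of $n-1$: exhibit one concrete CXp containing it. For $\svn{E}(n-1)=0$: invoke exactly the summation identity proved for Proposition~\ref{prop:rel}, applied to the $x_n=1$ slice, plus the fact that the $x_n=0$ slice contributes symmetrically. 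The main obstacle I anticipate is engineering a single set of four cofactors that makes all three requirements hold at once without the $n$-score construction and the $(n-1)$-score construction interfering — in particular ensuring that adding the $\kappa|_{x_n=0}$ part neither accidentally makes $n$ relevant nor reintroduces a nonzero term into $\svn{E}(n-1)$; this is where I'd expect to spend most of the effort, and the parity restriction $n$ even (so $n-1$ odd) is the hint that the Proposition~\ref{prop:rel} gadget is meant to be dropped in wholesale.
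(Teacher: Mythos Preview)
Your proposal is correct and matches the paper's approach: the paper too takes $i_1=n$, $i_2=n-1$, places the Proposition~\ref{prop:rel} symmetry gadget on the first $n-2=2m$ variables (with $\kappa_{00}$ and $\kappa_{01}$ isomorphic on disjoint halves $\{1,\dots,m\}$ and $\{m+1,\dots,2m\}$), and then realizes the Proposition~\ref{prop:irr} mechanism by setting $\kappa_1=\kappa_0\lor f$ for an $f$ that is \emph{independent of $x_{n-1}$}, which is exactly the non-interference trick you flag as the main obstacle --- it forces $\kappa_{11}-\kappa_{10}=\kappa_{01}-\kappa_{00}$, so the $\svn{E}(n-1)=0$ cancellation survives on both slices while $\Delta_n(\fml{S})=\tfrac12\exv[f\mid\cdot]>0$.
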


\begin{restatable}{proposition}{PropHighest}%
  \label{prop:highest}%
  For any $n\ge4$, there exists boolean functions defined on $n$
  variables, and at least one sample, which exhibits an
  issue~I5, i.e. there exists an
  irrelevant feature $i\in\fml{F}$, such that
  $|\svn{E}(i)|=\max\{|\svn{E}(j)|\:|\,j\in\fml{F}\}$.
\end{restatable}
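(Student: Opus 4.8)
The plan is to construct $\kappa$ by a direct choice of the four cofactors, using the notation $\kappa_{ab}=\kappa|_{x_n=a,\,x_{n-1}=b}$, and to leave features $1,\dots,m$ (with $m=n-2\ge 2$) as \emph{dummy} arguments on which $\kappa$ does not depend. Fix a parameter $\alpha>0$ and a base class value $c$ (say $c=1$), and set $\kappa(\mbf{x})=c$ whenever $x_{n-1}=1$, $\kappa(\mbf{x})=c+\alpha$ whenever $x_{n-1}=0$ and $x_n=1$, and $\kappa(\mbf{x})=c-2\alpha$ whenever $x_{n-1}=0$ and $x_n=0$; equivalently $\kappa_{01}\equiv\kappa_{11}\equiv c$, $\kappa_{10}\equiv c+\alpha$ and $\kappa_{00}\equiv c-2\alpha$. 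The target sample is $(\mbf{v},c)$ with $v_{n-1}=v_n=1$ and arbitrary values for the dummy features.

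First I would pin down the (ir)relevancy picture through contrastive explanations. On the region $x_{n-1}=1$ the output is constantly $c$, so any set $\fml{S}$ with $n-1\notin\fml{S}$ fixes $x_{n-1}$ to its sample value, forces similarity, and hence is not a $\wcxp$; on the other hand $\{n-1\}$ is a $\wcxp$, since fixing all other features to $\mbf{v}$ (in particular $x_n=1$) and flipping $x_{n-1}$ to $0$ changes the output to $c+\alpha\neq c$. Since $\wcxp$-membership is upward closed, $\{n-1\}$ is the unique CXp, and by the CXp/AXp minimal hitting set duality recalled earlier it is also the unique AXp. Therefore $\mfk{F}(\fml{E})=\{n-1\}$: feature $n-1$ is relevant (this needs only $\alpha\neq 0$), and every other feature, in particular feature $n$ and all dummies, is irrelevant.

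Next I would compute the SHAP scores. Because $\cfn{e}(\fml{S})$ depends only on $\fml{S}\cap\{n-1,n\}$, the features $1,\dots,m$ are null players, so each has score $0$; and, using the identity $\sum_{\fml{T}\subseteq\{1,\dots,m\}}\varsigma(\fml{T})=\tfrac12$ (a routine consequence of the definition of $\varsigma$), the scores of $n-1$ and $n$ coincide with their Shapley values in the two-feature game determined by $\cfn{e}(\emptyset)=c-\tfrac{\alpha}{4}$, $\cfn{e}(\{n-1\})=c$, $\cfn{e}(\{n\})=c+\tfrac{\alpha}{2}$, $\cfn{e}(\{n-1,n\})=c$. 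A short calculation then yields $\svn{E}(n)=\tfrac{3\alpha}{8}$, $\svn{E}(n-1)=-\tfrac{\alpha}{8}$, and $\svn{E}(j)=0$ for $j\le m$. Hence $|\svn{E}(n)|=\tfrac{3\alpha}{8}$ strictly exceeds $|\svn{E}(j)|$ for every $j\neq n$, i.e.\ $|\svn{E}(n)|=\max\{|\svn{E}(j)|:j\in\fml{F}\}$ with feature $n$ irrelevant, which is exactly issue~I5; moreover letting $\alpha\to\infty$ shows the discrepancy is unbounded, and the whole construction is valid for every $n\ge 4$.

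The one delicate point is arranging the cofactors so that three requirements hold simultaneously: feature $n$ must be formally irrelevant (forced by making $\kappa$ constant on $x_{n-1}=1$, which also isolates the unique CXp $\{n-1\}$), feature $n-1$ must be relevant (forced by $\alpha\neq 0$), and the two nonzero scores must be ordered so that the irrelevant feature wins, which is why the two values on $x_{n-1}=0$ are taken asymmetrically as $+\alpha$ and $-2\alpha$, giving $|\svn{E}(n)|=3\,|\svn{E}(n-1)|$. Verifying the null-player reduction and the ensuing Shapley arithmetic is then mechanical once the combinatorial identity for $\varsigma$ is established.
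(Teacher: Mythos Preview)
Your construction is clean and the SHAP arithmetic is correct, but it does not prove the proposition as stated: the section and the statement are about \emph{boolean functions}, i.e.\ classifiers $\kappa:\{0,1\}^n\to\{0,1\}$, and your $\kappa$ takes the three distinct values $c-2\alpha$, $c$, $c+\alpha$ (for any $\alpha\neq 0$), so it is not boolean. There is no choice of $c$ and $\alpha\neq 0$ that forces all three values into $\{0,1\}$, so the approach cannot be salvaged by tuning parameters. What you have built is essentially the paper's running example $\fml{M}_1$ (with dummy padding), which the paper uses only for multi-class classification and regression, not for the boolean propositions; in particular your remark about letting $\alpha\to\infty$ already signals a non-boolean codomain.

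The boolean constraint is precisely what makes this proposition nontrivial. With only two output values one cannot simply place asymmetric constants on the $x_{n-1}=0$ branch to inflate $|\svn{E}(n)|$; instead the paper takes $\kappa_0\equiv 0$ and chooses $\kappa_1$ to be a carefully shaped boolean function (the indicator of the Hamming sphere of radius $1$ around $\mbf{v}_{1..m}$). Irrelevance of feature $n$ then follows because $\kappa_0$ is identically $0$ at the predicted value, and the inequality $|\svn{E}(n)|>|\svn{E}(j)|$ for every $j\neq n$ requires an explicit combinatorial comparison of the marginal contributions, carried out by computing $\exv[\kappa_1\mid \mbf{x}_{\fml{S}}=\mbf{v}_{\fml{S}}]=(m-|\fml{S}|)/2^{\,m-|\fml{S}|}$ and then bounding $\svn{E}(n)\pm\svn{E}(j)$ term by term. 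That combinatorial work is the missing idea in your proposal.
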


\begin{restatable}{proposition}{PropSign}%
  \label{prop:sign}
  For any $n\ge4$,
  there exist functions defined on $n$
  variables, and at least one sample, which exhibits an
  issue~I6, i.e. there exists an
  irrelevant feature $i_1\in\fml{F}$, and a relevant feature $i_2\in\fml{F}\setminus\{i_1\}$,
  such that $\svn{E}(i_1) \times \svn{E}(i_2) > 0$.
\end{restatable}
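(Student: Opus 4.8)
The plan is to follow the same recipe used (implicitly) for the earlier propositions: start from an arbitrary boolean function $\mu$ on $m=n-2$ variables, and augment it with two fresh features $n-1$ and $n$ by specifying the four cofactors $\kappa_{00},\kappa_{01},\kappa_{10},\kappa_{11}$ (each a function on the first $m$ variables). The target sample sets $v_{n-1}=v_n=1$. We must choose these cofactors so that (i) feature $n$ is irrelevant — i.e.\ it appears in no AXp/CXp — while feature $n-1$ is relevant; and (ii) $\svn{E}(n)$ and $\svn{E}(n-1)$ are both nonzero and of the \emph{same sign}, so their product is strictly positive. The cleanest route is to make feature $n$ a "dummy-like" feature whose conditioning never changes the similarity behaviour (ensuring irrelevance), yet whose SHAP score is forced nonzero through an asymmetry in the expected values over the other cofactors — exactly the mechanism already exhibited in Table~\ref{tab:svs01} for $\fml{E}_1$, where feature~2 is irrelevant but has score $\alpha>0$. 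Simultaneously feature $n-1$ should genuinely be on a CXp (hence relevant), and we arrange the underlying values so that its SHAP score is also positive.

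Concretely, I would reuse the $\fml{M}_1$-style gadget: on the two new variables put values $1-6\alpha$, $1+2\alpha$, $1$, $1$ (scaled appropriately), which already gives one irrelevant feature with positive score and one relevant feature with positive score, for $\alpha>0$. The first step is to verify relevance/irrelevance via the AXp/CXp machinery of Section~\ref{sec:fxai}: show that fixing feature $n-1$ (together with whatever is needed from the base block) yields a WAXp not containing $n$, and that no AXp contains $n$, so $\irrelevant(n)\land\relevant(n-1)$; this is where the decomposition identities~\eqref{eq:exvdec01}–\eqref{eq:exvdec02} do the bookkeeping. The second step is the SHAP-score computation: using~\eqref{eq:sv} together with~\eqref{eq:exvdec02}, express $\svn{E}(n)$ and $\svn{E}(n-1)$ in terms of the expected values of the four cofactors over all $\fml{S}\subseteq\fml{F}\setminus\{n\}$ (resp.\ $\setminus\{n-1\}$); by construction these sums telescope to quantities of the form $c_1\alpha$ and $c_2\alpha$ with $c_1,c_2>0$, so the product is $c_1c_2\alpha^2>0$. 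Because the base function $\mu$ on the other $m$ features can be chosen arbitrarily (e.g.\ a function that makes those features irrelevant and contributes symmetrically, hence with zero marginal effect on the relevant differences), the construction works for every $n\ge4$.

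The main obstacle I anticipate is not the arithmetic but the careful verification that feature $n$ is genuinely \emph{irrelevant} in the formal-explanation sense while still having $\svn{E}(n)\neq 0$: one must show that for \emph{no} choice of $\fml{S}$ does $n$ belong to a subset-minimal WAXp, which requires checking that whenever $\fml{S}\cup\{n\}$ is a WAXp so is $\fml{S}$ — i.e.\ conditioning on $x_n=v_n=1$ never "rescues" similarity that fails without it. This is exactly the property that $\kappa_{11}$ and $\kappa_{01}$ (the $x_{n-1}=1$ cofactors, where the target lives) must agree on similarity with their $x_n=0$ counterparts, modulo the relevant feature $n-1$. Once that structural invariant is pinned down, controlling the \emph{signs} of the two scores is a matter of choosing the constant $\alpha>0$ (rather than $\alpha<0$), and the rest reduces to the same routine summation already carried out for $\fml{E}_1$. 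I would also double-check the edge case $n=4$ (so $m=2$, and the "base" block is itself trivial or a single dummy feature) to make sure the generic argument does not secretly need $m\ge1$ nontrivially.
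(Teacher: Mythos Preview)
Your proposal has a concrete error: the $\fml{M}_1$ gadget from \cref{ex:tr:rt} does \emph{not} give a relevant feature with positive score. In that example feature~1 is the relevant feature and \cref{tab:svs01} shows $\svn{E}(1)=0$, while the irrelevant feature~2 has $\svn{E}(2)=\alpha$. So the product $\svn{E}(1)\cdot\svn{E}(2)=0$, which is exactly Issue~I4, not I6. Reusing that gadget cannot establish I6 no matter how you choose $\alpha$ or how you pad with extra features. Separately, the proposition sits in the \emph{Boolean Domains} subsection, and the paper's construction produces a genuine boolean classifier; your $\fml{M}_1$ gadget is multi-class, so even if the arithmetic worked it would not match the intended setting.

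The paper's proof takes a different route. It builds $\kappa$ on $n=m+2$ variables from three pairwise-disjoint boolean functions $\kappa',f,g$ on the first $m$ features, setting $\kappa_{00}=\kappa'$, $\kappa_{01}=\kappa'\lor f$, $\kappa_{10}=\kappa'\lor g$, $\kappa_{11}=\kappa'\lor f\lor g$. The sample has $f(\mbf{v}_{1..m})=1$, $\kappa'(\mbf{v}_{1..m})=g(\mbf{v}_{1..m})=0$, and $v_{n-1}=v_n=1$. Because $\kappa_0$ and $\kappa_1$ are arranged to have identical CXp sets, feature $n$ is irrelevant; because flipping $x_{n-1}$ to $0$ changes the prediction, feature $n-1$ is relevant. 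The disjointness then gives $\Delta_n(\fml{S})=\tfrac{1}{2}\exv[g\,|\,\mbf{x}_{\fml{S}}=\mbf{v}_{\fml{S}}]$ and $\Delta_{n-1}(\fml{S})=\tfrac{1}{2}\exv[f\,|\,\mbf{x}_{\fml{S}}=\mbf{v}_{\fml{S}}]$ for every relevant $\fml{S}$, so both SHAP scores are strictly positive and their product is positive. The key idea you are missing is that I6 needs \emph{two independent positive drifts} (here supplied by $f$ and $g$), not the cancellation mechanism that makes $\fml{M}_1$ work for I3/I4.
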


\begin{restatable}{proposition}{PropEvenly}%
\label{prop:evenly}
  For boolean classifiers,
  if $\fml{E} = (\fml{M}, (\mbf{v}, c))$ exhibits an identified issue (I1 to I6),
  so does $\fml{E}' = (\fml{M}', (\mbf{v}, \neg c))$, where $\fml{M}'$ is the negated classifier of $\fml{M}$.
\end{restatable}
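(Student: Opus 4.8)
The plan is to show that negating the classifier preserves all of the structural ingredients that define each issue: the similarity predicate, the sets of AXps/CXps (hence feature (ir)relevancy), and the magnitudes of the SHAP scores. The key observation is that for a Boolean classifier $\fml{M}$ with classification function $\kappa$, the negated classifier $\fml{M}'$ has function $\kappa' = \neg\kappa = 1-\kappa$, and the target sample changes from $(\mbf{v},c)$ to $(\mbf{v},\neg c)$ — so in particular $\kappa'(\mbf{v}) = \neg c$, i.e.\ the sample is consistent with $\fml{M}'$.

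First I would establish that the similarity predicate is unchanged: for classification, $\similar(\mbf{x};\fml{E}') = [\kappa'(\mbf{x})=\kappa'(\mbf{v})] = [\,1-\kappa(\mbf{x}) = 1-\kappa(\mbf{v})\,] = [\kappa(\mbf{x})=\kappa(\mbf{v})] = \similar(\mbf{x};\fml{E})$. Since $\waxp$, $\wcxp$, $\axp$, $\cxp$ are all defined purely through expected values of $\similar$, the sets of AXps and CXps for $\fml{E}'$ coincide with those for $\fml{E}$, and therefore $\mathfrak{F}(\fml{E}') = \mathfrak{F}(\fml{E})$; hence $\relevant(i)$ and $\irrelevant(i)$ hold for exactly the same features in both problems.

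Next I would relate the SHAP scores. The characteristic function satisfies $\cf_e(\fml{S};\fml{E}') = \exv[\kappa'(\mbf{x})\mid\mbf{x}_{\fml{S}}=\mbf{v}_{\fml{S}}] = \exv[\,1-\kappa(\mbf{x})\mid\mbf{x}_{\fml{S}}=\mbf{v}_{\fml{S}}\,] = 1 - \cf_e(\fml{S};\fml{E})$, using linearity of expectation. Consequently $\Delta_i(\fml{S};\fml{E}',\cf_e) = -\Delta_i(\fml{S};\fml{E},\cf_e)$ for every $\fml{S}$ and $i$, and since the coefficients $\varsigma(\fml{S})$ are unchanged, we get $\svn{E}(i;\fml{E}',\cf_e) = -\svn{E}(i;\fml{E},\cf_e)$ for all $i\in\fml{F}$. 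In particular $|\svn{E}(i;\fml{E}')| = |\svn{E}(i;\fml{E})|$, the property $\svn{E}(i;\fml{E}')=0 \iff \svn{E}(i;\fml{E})=0$ holds, and sign products $\svn{E}(i_1;\fml{E}')\times\svn{E}(i_2;\fml{E}') = \svn{E}(i_1;\fml{E})\times\svn{E}(i_2;\fml{E})$ are preserved.

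Finally I would check each issue I1--I6 against these two invariances. Issues I1, I3, I4 depend only on which features are relevant/irrelevant and on whether their scores are zero or nonzero — all preserved. Issues I2 and I5 additionally compare absolute values $|\svn{E}(\cdot)|$, also preserved. Issue I6 compares the sign of a product of two scores, and since both scores flip sign the product is unchanged. Hence if $\fml{E}$ exhibits any of I1--I6 via witnessing features $i,i_1,i_2$, the same features witness the same issue for $\fml{E}'$. I do not anticipate a serious obstacle here; the only point requiring a little care is confirming that the generalized notion of relevancy (defined via AXps) is genuinely invariant, which follows immediately once the similarity-predicate identity is in hand. One could streamline the whole argument by noting that $\similar$, $\mathfrak{F}$, and $|\svn{E}(\cdot)|$ are the only quantities appearing in the issue definitions (up to sign products), and all three are invariant under classifier negation with the correspondingly negated sample.
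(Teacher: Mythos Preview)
Your proposal is correct and follows essentially the same approach as the paper: both arguments hinge on $\cf_e(\fml{S};\fml{E}') = 1 - \cf_e(\fml{S};\fml{E})$, giving $\svn{E}(i;\fml{E}') = -\svn{E}(i;\fml{E})$ and hence equality of absolute values. Your write-up is actually more complete than the paper's, which jumps directly from $|\svn{E}(i;\fml{E})|=|\svn{E}(i;\fml{E}')|$ to the conclusion without explicitly verifying that feature (ir)relevancy is preserved; your observation that $\similar(\cdot;\fml{E}')=\similar(\cdot;\fml{E})$, and hence that the AXp/CXp sets coincide, fills that gap.
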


The implication of~\cref{prop:evenly} is that all the identified issues distributed evenly
for samples where the prediction takes value 1 and samples where the prediction takes value 0.

\subsection{Classification -- Real-valued Domains} \label{ssec:bresr}
Evidently, equations \eqref{eq:exvdec01} and \eqref{eq:exvdec02} can be extended
to discrete domains and real-valued domains.
The difference lies in the computation of $\exv[\kappa]$, 
which can be done either through counting (when the domains are discrete)
or integration (when the domains are real-valued).

%%%%% Regression case %%%%%
\subsection{Regression -- Finite Codomain}

We illustrate variants of the known issues of SHAP scores with the
explanation problem $\fml{E}_{1}$. For an arbitrary $\alpha\not=0$,
\cref{tab:svs01} summarizes the computation of the SHAP scores. As can
be concluded, the SHAP score of feature 1 is always 0, and the SHAP
score of feature 2 is not 0 (assuming $\alpha\not=0$).
For example, if we pick $\alpha=1$, we get a (numeric) classification
problem with $\fml{K}=\{-5,1,3\}$.
However, we can pick $\alpha=\sfrac{1}{4}$, to obtain a regression
problem with codomain $\{-\sfrac{1}{2},1,\sfrac{3}{2}\}$.
In both cases, it is the case that feature 1, which is a relevant
feature, has a SHAP score of 0, and feature 2, which is an irrelevant
feature, has a non-zero SHAP score.

Although the examples above capture both classification and
regression, one can argue that these examples are still somewhat
artificial, since the codomain of the regression model consists of a
small number of different values, either integer or real-valued.
%
%ML model only predicts a small number of different values.
The next section shows how this criticism can be addressed.

%%\paragraph{Classification problems.}
%%~\\
%\subsection{Classification Problems}
%
%\jnote{Basic simple DT example, \& criticisms.}
%
%We first study a very simple example classifier, aiming at
%illustrating the issues with SHAP scores.
%
%\begin{example}
%  Let $\alpha=1$. Hence, $\fml{K}=\{-5,1,3\}$.
%\end{example}
%
%\paragraph{Regression problems.}
%~\\
%\subsection{Unrestricted Regression}
\subsection{Regression -- Uncountable Codomain}

We now study a regression model containing a non-finite (in fact
uncountable) number of values in the codomain of its regression
function.

%\jnote{From a DT to a RT, \& criticisms.}
%
%Instead of decision tree with classes corresponding to numeric values,
%we can instead consider a regression tree (RT), where the prediction
%values are arbitrary real numbers.
%
%\begin{example}
%  Let $\alpha=\sfrac{1}{4}$. Hence,
%  $\mbb{C}=\{-\sfrac{1}{2},1,\sfrac{3}{2}\}$.
%\end{example}

%\jnote{From RT to non-continuous CEx, \& criticisms.}
%
%Nevertheless, the RT example above can still be deemed insufficient,
%since the number of different values that can be predicted is still
%restricted.

\begin{example}(Regression model $\fml{M}_2$.) \label{ex:rm02}
  We consider a regression problem defined over two real-valued
  features, taking values from interval
  $[-\sfrac{1}{2},\sfrac{3}{2}]$.
  Thus, we have $\fml{F}=\{1,2\}$,
  $\mbb{D}_1=\mbb{D}_2=\mbb{D}=[-\sfrac{1}{2},\sfrac{3}{2}]$,
  $\mbb{F}=\mbb{D}\times\mbb{D}$.
  %$[-\sfrac{1}{2},\sfrac{3}{2}]\times[-\sfrac{1}{2},\sfrac{3}{2}]$.
  (We also let $\mbb{D}^{+}=[\sfrac{1}{2},\sfrac{3}{2}]$
  and $\mbb{D}^{-}=\mbb{D}\setminus\mbb{D}^{+}$.) %[-\sfrac{1}{2},\sfrac{1}{2})
  In addition, the regression model maps to real values,
  i.e.\ $\mbb{K}=\mbb{R}$, and is defined as follows:
  \[
    \rho_2(x_1,x_2) =
    \left\{
    \begin{array}{lcl}
      x_1 & ~~ & \tn{if $x_1\in\mbb{D}^{+}$} \\ %[\sfrac{1}{2},\sfrac{3}{2}]
      x_2-2 & ~~ & \tn{if
        $x_1\not\in\mbb{D}^{+}\land{x_2}\not\in\mbb{D}^{+}$}\\
      %[\sfrac{1}{2},\sfrac{3}{2}] ; [\sfrac{1}{2},\sfrac{3}{2}]
      x_2+1 & ~~ & \tn{if
        $x_1\not\in\mbb{D}^{+}\land{x_2}\in\mbb{D}^{+}$}\\
      %[\sfrac{1}{2},\sfrac{3}{2}] ; [\sfrac{1}{2},\sfrac{3}{2}]
    \end{array}
    \right.
  \]
  As a result, the regression model is represented by
  $\fml{M}_2=(\fml{F},\mbb{F},\mbb{K},\rho_2)$.
  Moreover, we assume the target sample to be $(\mbf{v}_2,q_2)=((1,1),1)$,
  and so the explanation problem becomes
  $\fml{E}_2=(\fml{M}_2,(\mbf{v}_2,q_2))$.
\end{example}

\begin{table}[t]
  \centering
  \renewcommand{\tabcolsep}{0.775em}
  \begin{tabular}{ccccc} \toprule
    $\fml{S}$ & $\emptyset$ & $\{1\}$ & $\{2\}$ & $\{1,2\}$ \\
    \midrule[0.75pt]
    $\exv[\rho_2(\mbf{x})\,|\,\mbf{x}_{\fml{S}}=\mbf{v}_{\fml{S}}]$
    & $\sfrac{1}{2}$ & 1 & $\sfrac{3}{2}$ & 1 \\
    \bottomrule
  \end{tabular}
  \caption{Expected values of $\rho_2$, for each possible set
    $\fml{S}$ of fixed features, and given the sample $((1,1),1)$.}
  \label{tab:avg02}
\end{table}

\begin{example}(AXps, CXps and AExs for $\fml{E}_2$.)
  Given the regression model $\fml{M}_2$, we define the similarity
  predicate by picking a suitably small value $\delta$, e.g.\ 
  $\delta<\sfrac{1}{4}$.
  %$\delta<\sfrac{1}{2}$.
  This suffices to ensure that $\sigma$ only takes value 1 when
  feature 1 takes value 1.\\
  Given the above, the similarity predicate takes value 1 only when
  feature 1 takes value 1, and independently of the value assigned to
  feature 2.\\
  Thus, fixing feature 1 ensures that the similarity predicate is 1,
  and so the expected value is 1. Otherwise, if feature 1 is allowed
  to take a value other than 1, then the similarity predicate can take
  value 0, and so the expected value is no longer 1.
  As a result, $\{1\}$ is one (and the only) AXp, and $\{1\}$ is also
  one (and the only) CXp.\\
  A similar analysis allow concluding that a $l_0$-minimal AEx
  exists iff feature 1 is allowed to take any value from
  its domain.
\end{example}

\begin{example}(SHAP scores for $\fml{E}_2$.)
  The expected values of $\rho_2$ for all possible sets $\fml{S}$ of
  fixed features is shown in~\cref{tab:avg02}.%
  \footnote{%
  The computation of these expected values is fairly straightforward,
  and is summarized in the supplemental materials.}
  Observe that these values correspond exactly to the expected values
  shown in~\cref{tab:avg01} when $\alpha=\sfrac{1}{2}$.
  Hence, the computation of SHAP scores is the one shown
  in~\cref{tab:svs01} by setting $\alpha=\sfrac{1}{2}$, and so
  $\svn{E}(1)=0$ and $\svn{E}(2)=\sfrac{1}{2}$.
\end{example}

The above examples demonstrate that one can construct a regression
model for which the computed SHAP scores will be misleading, assigning
importance to a feature having no influence in the prediction, and
assigning no important to a feature having complete influence in the
prediction.

Given the above, this section proves the following result.
\begin{proposition}
  There exist regression models, with an uncountable codomain, for
  which each feature $i$ is either irrelevant and its SHAP score is
  non-zero or the feature is relevant and its SHAP score is zero.
\end{proposition}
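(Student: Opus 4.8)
The plan is to use the concrete regression model $\fml{M}_2$ of \cref{ex:rm02} as the witness and to verify that \emph{both} of its features obey the stated dichotomy; note that $\rho_2$ takes uncountably many values, since on the branch $x_1\in\mbb{D}^{+}$ it equals $x_1$, which ranges over $[\sfrac{1}{2},\sfrac{3}{2}]$. Since $\fml{F}=\{1,2\}$, it suffices to show that feature~$1$ is relevant with $\svn{E}(1)=0$ and that feature~$2$ is irrelevant with $\svn{E}(2)\neq0$: then no relevant feature has a zero score and no irrelevant feature has a non-zero score, which is exactly the claim.

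First I would compute the characteristic function $\cfn{e}(\fml{S};\fml{E}_2)=\exv[\rho_2(\mbf{x})\,|\,\mbf{x}_{\fml{S}}=\mbf{v}_{\fml{S}}]$ for each of the four subsets $\fml{S}\subseteq\{1,2\}$, via \eqref{eq:evdefint} when $\fml{S}\neq\fml{F}$ (reading $|\Upsilon(\fml{S};\mbf{v})|$ as the Lebesgue volume of the corresponding slice of $\mbb{F}=\mbb{D}\times\mbb{D}$) and directly as $\cfn{e}(\{1,2\})=\rho_2(\mbf{v}_2)=1$ for the full set. Each remaining integral splits into the three pieces of $\rho_2$ according to whether $x_1\in\mbb{D}^{+}$ and whether $x_2\in\mbb{D}^{+}$, and the only facts needed are that $\mbb{D}$, $\mbb{D}^{+}$ and $\mbb{D}^{-}$ have lengths $2$, $1$ and $1$. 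This reproduces the entries of \cref{tab:avg02}, namely $\cfn{e}(\emptyset)=\sfrac{1}{2}$, $\cfn{e}(\{1\})=1$, $\cfn{e}(\{2\})=\sfrac{3}{2}$ and $\cfn{e}(\{1,2\})=1$, which coincide with those of \cref{tab:avg01} for $\alpha=\sfrac{1}{2}$.

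Next, since SHAP scores depend on the model only through the characteristic function, substituting these four values into \eqref{eq:def:delta}--\eqref{eq:sv} reproduces the computation of \cref{tab:svs01} with $\alpha=\sfrac{1}{2}$. Explicitly, $\svn{E}(1)=\sfrac{1}{2}\,(\cfn{e}(\{1\})-\cfn{e}(\emptyset))+\sfrac{1}{2}\,(\cfn{e}(\{1,2\})-\cfn{e}(\{2\}))=\sfrac{1}{2}\cdot\sfrac{1}{2}+\sfrac{1}{2}\cdot(-\sfrac{1}{2})=0$, and symmetrically $\svn{E}(2)=\sfrac{1}{2}\cdot1+\sfrac{1}{2}\cdot0=\sfrac{1}{2}\neq0$.

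Finally I would settle relevancy. Fix $\delta<\sfrac{1}{4}$ in the similarity predicate. Then $\rho_2(\mbf{x})$ can equal $1$ only on the branch $x_1\in\mbb{D}^{+}$, and there only for $x_1=1$, because the other two branches take values in $[-\sfrac{5}{2},-\sfrac{3}{2}]$ and $[\sfrac{3}{2},\sfrac{5}{2}]$, both at distance larger than $\delta$ from $1$; hence $\similar(\mbf{x};\fml{E}_2)=1$ iff $x_1\in[1-\delta,1+\delta]$. Consequently $\exv[\similar\,|\,\mbf{x}_{\{1\}}=\mbf{v}_{\{1\}}]=1$, so $\{1\}$ is a WAXp, whereas $\exv[\similar\,|\,\mbf{x}_{\fml{S}}=\mbf{v}_{\fml{S}}]=\delta<1$ for $\fml{S}\in\{\emptyset,\{2\}\}$, so neither of those is a WAXp. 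Therefore $\{1\}$ is the unique AXp, $\mathfrak{F}(\fml{E}_2)=\{1\}$, feature~$1$ is relevant and feature~$2$ is irrelevant, which completes the proof. The closest thing to an obstacle is purely bookkeeping: using \eqref{eq:evdefint} correctly with volumes in place of counts and carrying out the piecewise integration cleanly; the integrals themselves are elementary.
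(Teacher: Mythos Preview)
Your proposal is correct and follows essentially the same route as the paper: you take $\fml{M}_2$ from \cref{ex:rm02} as the witness, recompute the four values of $\cfn{e}$ (matching \cref{tab:avg02}), read off the SHAP scores via \cref{tab:svs01} with $\alpha=\sfrac{1}{2}$, and determine relevancy through the similarity predicate to conclude that the unique AXp is $\{1\}$. One wording slip to fix: in your first paragraph you conclude ``then no relevant feature has a zero score and no irrelevant feature has a non-zero score,'' which is the negation of what you have just established (and of the proposition); you mean that every relevant feature has a zero score and every irrelevant feature has a non-zero score.
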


Nevertheless, the previous example is also open to criticism because
the regression function is not continuous.
Continuity is an important aspect of some ML models, such that in
recent years Lipschitz continuity has been exploited to ensure the
adversarial robustness of ML models.
Thus, one additional challenge is whether regression models respecting
Lipschitz continuity (and so plain continuity) can produce
unsatisfactory SHAP scores. The next section addresses this challenge.

%% The positioning of this function definition is a moving target...
\begin{figure*}[t]
  \centering
  %\begin{minipage}[t]{\textwidth}
  \begin{tabular}{l}
    %\begin{equation} \label{eq:rm03}
    $  \rho_3(x_1,x_2) =
      \left\{
      \begin{array}{lcl}
        x_1 &
        \quad & \tn{if $x_2\le1\land\alpha{x_1}\le\alpha$} \\[2pt]
        (1+4|\alpha|)x_1-4|\alpha| &
        \quad & \tn{if $x_2\le1\land\alpha{x_1}\ge\alpha$} \\[2pt]
        28|\alpha|{x_1}{x_2} + (1-28|\alpha|)x_1 -28|\alpha|{x_2}+28|\alpha| &
        \quad & \tn{if $x_2\ge1\land\alpha{x_1}\le\alpha$} \\[2pt]
        -4|\alpha|{x_1}{x_2}+(1+8|\alpha|)x_1+4|\alpha|{x_2}-8|\alpha| &
        \quad & \tn{if $x_2\ge1\land\alpha{x_1}\ge\alpha$} \\[5pt]
      \end{array}
      \right.
      $
    %\end{equation}
  \end{tabular}
  %\end{minipage}
  \caption{Example of regression model that is Lipschitz continuous.}
  \label{fig:rmlc}
\end{figure*}
%
%
\begin{comment}
%
\begin{figure*}[t]
  \centering
  %\begin{minipage}[t]{\textwidth}
  \begin{tabular}{l}
    %\begin{equation} \label{eq:rm03}
    $  \rho_3(x_1,x_2) =
      \left\{
      \begin{array}{lcl}
        x_1 &
        \quad & \tn{if $x_1\le1\land{x_2}\le1$} \\[2pt]
        (1+4\alpha)x_1-4\alpha &
        \quad & \tn{if $x_1\ge1\land{x_2}\le1$} \\[2pt]
        28\alpha{x_1}{x_2} + (1-28\alpha)x_1 -28\alpha{x_2}+28\alpha &
        \quad & \tn{if $x_1\le1\land{x_2}\ge1$} \\[2pt]
        -4\alpha{x_1}{x_2}+(1+8\alpha)x_1+4\alpha{x_2}-8\alpha &
        \quad & \tn{if $x_1\ge1\land{x_2}\ge1$} \\[5pt]
      \end{array}
      \right.
      $
    %\end{equation}
  \end{tabular}
  %\end{minipage}
  \caption{Example of regression model that is Lipschitz continuous.}
  \label{fig:rmlc}
\end{figure*}
%
\end{comment}

%\section{Limitations of SHAP Scores -- Regression}
%\section{Limitations of SHAP Scores \& Lipschitz Continuity}
%\section{Failings of SHAP Scores \& Lipschitz Continuity}
%\label{sec:proofs}
%\subsection{Lipschitz Continuity}
\subsection{Regression -- Lipschitz Continuity}
\label{sec:lc}

%%%\jnote{A Lipschitz continuous CEx: analysis \& proofs}

The examples studied in the earlier sections expand significantly the
range of ML models for which unsatisfactory can be produced.
However, a major source of criticism is that these ML models are not
continuous.

We now discuss a \emph{family} of continuous ML models, that
also produce unsatisfactory SHAP scores. We will then argue that the
proposed family of ML models also ensures Lipschitz continuity.

\begin{example} \label{ex:rm03}
  We consider a regression problem defined over two real-valued
  features, taking values from interval $[0,2]$.
  Thus, we have $\fml{F}=\{1,2\}$,
  $\mbb{D}_1=\mbb{D}_2=\mbb{D}=[0,2]$,
  $\mbb{F}=\mbb{D}\times\mbb{D}$.
  In addition, the regression model maps to real values,
  i.e.\ $\mbb{K}=\mbb{R}$, and is defined as shown in~\cref{fig:rmlc}.
  The value is such that $\alpha\in\mbb{R}\setminus\{0\}$.
  As a result, the regression model is represented by
  $\fml{M}_3=(\fml{F},\mbb{F},\mbb{K},\rho_3)$.
  Moreover, we assume the target sample to be
  $(\mbf{v}_3,q_3)=((1,1),1)$, and so the explanation problem becomes
  $\fml{E}_3=(\fml{M},(\mbf{v}_3,q_3))$.
\end{example}

By inspection, it is plain that $\rho_3$ is continuous; this is
further discussed below.

\begin{example}(AXps, CXps and AExs for $\fml{E}_3$.)
  As before, it is plain to reach the conclusion that the set of AXps
  is $\{\{1\}\}$, and this is also the set of CXps. Moreover, and as
  before, there is a $l_0$-minimal AEx containing feature 1.
  %\jnote{Add missing detail.}
\end{example}

\begin{example}(SHAP scores for $\fml{E}_3$.)
  The regression model $\fml{M}_3$ is devised such that the expected
  values of $\rho_3$ for each possible set $\fml{S}$ of 
  fixed features are exactly the ones shown in~\cref{tab:avg01}.
  As a result, the computed SHAP scores are the same as before, and so
  they are again unsatisfactory.
  %\jnote{add missing detail.}
\end{example}

Given the above, this section proves the following result.
\begin{proposition}
  There exist regression models, respecting Lipschitz continuity, for
  which each feature $i$ is either irrelevant and its SHAP score is
  non-zero or the feature is relevant and its SHAP score is zero.
\end{proposition}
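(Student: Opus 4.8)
The plan is to use the explicit regression model $\fml{M}_3$ of \cref{ex:rm03}, whose regression function $\rho_3$ is displayed in \cref{fig:rmlc}, together with the target sample $(\mbf{v}_3,q_3)=((1,1),1)$ and a sufficiently small threshold $\delta>0$ in the similarity predicate $\similar(\mbf{x};\fml{E}_3)=[\,|\rho_3(\mbf{x})-1|\le\delta\,]$. Concretely, I would show: (i) $\rho_3$ is well defined and Lipschitz-continuous on $\mbb{F}=[0,2]^2$; (ii) the characteristic function $\cfn{e}(\cdot\,;\fml{E}_3)$ coincides with the entries of \cref{tab:avg01}, so that the SHAP scores are exactly those of \cref{tab:svs01}; and (iii) feature $1$ is relevant while feature $2$ is irrelevant. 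Combining (ii) and (iii) yields a feature that is relevant yet has SHAP score $0$ and a feature that is irrelevant yet has a nonzero SHAP score; since the construction is parameterized by $\alpha\in\mbb{R}\setminus\{0\}$, this gives infinitely many such models.

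For (i), I would first check by substitution that the four branches of $\rho_3$ agree on the lines $x_2=1$ and $x_1=1$ where the defining regions overlap (note that $\alpha x_1\le\alpha$ and $\alpha x_1\ge\alpha$ each reduce to a comparison of $x_1$ with $1$, for either sign of $\alpha$), so $\rho_3$ is a single continuous function on the square, with $\rho_3(1,x_2)\equiv1$. Each branch is an affine or bilinear polynomial, hence $C^\infty$, and on $[0,2]^2$ its gradient has norm bounded by an explicit constant $C_j=1+O(|\alpha|)$. Because the four closed regions are convex, cover $\mbb{F}$, and $\rho_3$ is continuous, any segment $[\mbf{x}_1,\mbf{x}_2]\subseteq\mbb{F}$ splits into finitely many sub-segments, each contained in one region, on which the per-branch bound applies; summing along the segment shows $\rho_3$ is globally $C$-Lipschitz with $C=\max_j C_j$, so \eqref{def:lipc} holds.

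For (ii), I would evaluate $\cfn{e}(\fml{S};\fml{E}_3)=\exv[\rho_3(\mbf{x})\mid\mbf{x}_{\fml{S}}=\mbf{v}_{\fml{S}}]$ from \eqref{eq:evdefint} for each $\fml{S}\subseteq\{1,2\}$. The cases $\fml{S}=\{1,2\}$ and $\fml{S}=\{1\}$ follow at once from $\rho_3(1,x_2)\equiv1$, giving value $1$; the cases $\fml{S}=\{2\}$ and $\fml{S}=\emptyset$ require, respectively, one integral over $[0,2]$ and one over $[0,2]^2$, each split along $x_1=1$ and $x_2=1$ into the regions of \cref{fig:rmlc}. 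A direct computation gives $\cfn{e}(\emptyset)=1-\alpha$, $\cfn{e}(\{1\})=1$, $\cfn{e}(\{2\})=1+\alpha$, $\cfn{e}(\{1,2\})=1$ for both signs of $\alpha$, matching \cref{tab:avg01}; substituting into \eqref{eq:sv} reproduces \cref{tab:svs01}, so $\svn{E}(1;\fml{E}_3,\cfn{e})=0$ and $\svn{E}(2;\fml{E}_3,\cfn{e})=\alpha\neq0$. For (iii), since $\rho_3(1,x_2)\equiv1$ but $\rho_3$ is not constantly $1$ (e.g.\ $|\rho_3(0,0)-1|\ge1>\delta$), the set $\{1\}$ is a $\waxp$ whereas $\emptyset$ and $\{2\}$ are not, so $\{1\}$ is the unique AXp and, dually, the unique CXp, whence $\mathfrak{F}(\fml{E}_3)=\{1\}$: feature $1$ is relevant, feature $2$ irrelevant. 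The main obstacle is the pair (i)+(ii): verifying continuity across every region boundary and carrying out the two nontrivial integrals so that they indeed return $1-\alpha$ and $1+\alpha$; once that is in place, the Lipschitz bound and the relevancy argument are routine.
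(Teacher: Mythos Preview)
Your proposal is correct and follows essentially the same route as the paper: exhibit $\fml{M}_3$ with $\rho_3$ from \cref{fig:rmlc} at the sample $((1,1),1)$, verify that the conditional expectations reproduce \cref{tab:avg01} (hence the SHAP scores of \cref{tab:svs01}), check that $\{1\}$ is the unique AXp/CXp so that feature~$1$ is relevant and feature~$2$ irrelevant, and argue Lipschitz continuity by gluing the per-region bounds. Your treatment of the Lipschitz step---bounding the (possibly bilinear) branch gradients on the compact square and splitting segments across the convex pieces---is in fact more careful than the paper's sketch, which simply appeals to the folklore that continuously glued Lipschitz pieces remain Lipschitz.
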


Finally, it is simple to prove that $\rho_3$ is Lipschitz-continuous.
The proof is included in the supplemental materials. Also, the
continuity of $\rho_3$ (claimed above) is implied by the fact that
$\rho_3$ Lipschitz-continuous.

%
%\begin{proposition}
%  $\rho_3$ is Lipschitz-continuous.
%\end{proposition}

%\section{Additional Results}
%\section{Beyond Lipschitz Continuity}
%\subsection{Beyond Lipschitz Continuity -- Differentiability}
%\subsection{Beyond Lipschitz -- Continuous Differentiability}
%\subsection{Beyond Lipschitz -- Arbitrary Differentiability}
\subsection{Regression -- Arbitrary Differentiability}
\label{ssec:blc}

This section argues that the issues with SHAP scores can be identified
even when regression models are arbitrary differentiable.%
\footnote{
It is well-known that differentiability does not imply Lipschitz
continuity. Hence, we consider the two cases separately.}
We provide a simple argument below.
\footnote{The actual construction of the regression model becomes
somewhat more cumbersome, and so we just give the rationale for
constructing the model.}

To devise an arbitrary differentiable regression model, we use
$\rho_3$ (see~\cref{fig:rmlc}) and the guiding example.
The main insight is to take the rectangle
$[1-\epsilon,1+\epsilon]×[0,2]$ with (a very small) $\epsilon>0$.
% (something like 0.01 for example). 
In this rectangle, we replace the function by polunomials on $x_1$
with the same value on $\{1\}×[0,2]$, on $\{1-\epsilon\}×[0,2]$ and on
$\{1+\epsilon\}×[0,2]$ (the dependence on $x_2$ remains unchanged, we
still have a polynomial on $x_1$ for $x_2\in[0,1]$, another one for
$x_2=2$ and the polynomial for $x_2\in[1,2]$ is given by $2-x_2$ times
the former plus $x_2-1$ times the latter like for left and right
sides) plus we also fix the value of the $n$ first derivatives 
on $x_1$ for $x_1\in\{1-\epsilon,1+\epsilon\}$. As these are $3+2n$
(with $n$ being the number of times the counter-example will be
derivable) constraints, it is possible with infinitly many $3+2n$
degree polynomials (exactly one of them will be degree $2+2n$ or less
but it is not necessarly the one that interest us) so we can do it for
every $n$. Now in these infinitly many polynomials, we pick the one
with the same average value on the rectangle of the original function,
we can do it because we only fixed part of relative size 0 so in these
polynomials there are every average value in $\mbb{R}$ (we can also fix
the fact that $\rho>1$ if $x_1>1$ and $\rho<1$ if $x_1<1$ with 
higher non-controlled degree if necessary). The same can be done with
the other non-derivable line as the conditions are the same. Then we
have a new function with n-times derivability and lipschitz that still
have the same properties. 

Given the above, this section proves the following result.
\begin{proposition}
  There exist regression models, that are arbitrarily differentiable,
  for which each feature $i$ is either irrelevant and its SHAP score is
  non-zero or the feature is relevant and its SHAP score is zero.
\end{proposition}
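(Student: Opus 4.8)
The plan is to build on the Lipschitz-continuous example $\rho_3$ from \cref{fig:rmlc} and smooth out its two "seams" — the lines $x_1=1$ (equivalently $\alpha x_1 = \alpha$) and $x_2=1$ — by local polynomial interpolation, exactly as sketched in the informal paragraph above. Fix a target degree of differentiability $n$. First I would isolate the only source of non-differentiability: $\rho_3$ is already piecewise polynomial and smooth away from the hyperplanes $x_1=1$ and $x_2=1$, so it suffices to repair the function in a thin slab $R_1 = [1-\epsilon,1+\epsilon]\times[0,2]$ around the first seam and, symmetrically, in $R_2 = [0,2]\times[1-\epsilon,1+\epsilon]$ around the second, for a sufficiently small $\epsilon>0$ chosen so that $R_1$ and $R_2$ do not interfere with the evaluation points needed for the explanation problem and so that $\mbf{v}_3=(1,1)$ is interior to both slabs.

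Next I would carry out the interpolation on $R_1$. For each fixed $x_2\in[0,1]$ the original $\rho_3$ is a polynomial in $x_1$; I replace it on $[1-\epsilon,1+\epsilon]$ by a polynomial $p(x_1;x_2)$ of degree $3+2n$ satisfying: (i) matching values of $\rho_3$ on the three fibres $x_1\in\{1-\epsilon,1,1+\epsilon\}$; (ii) matching the first $n$ derivatives in $x_1$ at $x_1\in\{1-\epsilon,1+\epsilon\}$ (so the patch glues $C^n$ to the untouched region); and (iii) a free parameter left to fix the $x_1$-average of the patch over $[1-\epsilon,1+\epsilon]$ to equal that of the original $\rho_3$. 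That is $3+2n$ interpolation constraints plus one averaging constraint; since the space of degree-$(3+2n)$ polynomials has dimension $4+2n$, the interpolation constraints leave a one-parameter family, and the average is a nonconstant linear functional on that family, so it can be set to any real value — in particular to the original average. For $x_2=2$ I do the analogous construction, and for $x_2\in[1,2]$ I define the patch by the convex combination $(2-x_2)\,p(x_1;\text{bottom}) + (x_2-1)\,p(x_1;\text{top})$, mirroring how $\rho_3$ itself interpolates between its $x_2\le1$ and $x_2\ge1$ pieces; this keeps the patch polynomial in $x_1$ and affine in $x_2$, hence smooth, and keeps it matching $\rho_3$ on $\partial R_1$. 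I then repeat verbatim for the seam $x_2=1$ on $R_2$. If one also wants the sign conditions $\rho>1$ for $x_1>1$ and $\rho<1$ for $x_1<1$ preserved, one spends extra (uncontrolled) degree on a small correction that is identically zero to order $n$ at the slab boundaries and at $x_1=1$, so the properties already fixed are untouched.

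Two things then need to be checked. First, differentiability: away from the slabs the function is the original piecewise polynomial, which is smooth there; inside each slab it is polynomial; and across $\partial R_1,\partial R_2$ the value and first $n$ derivatives agree by construction, so the result is $C^n$ on all of $\mbb{F}$. (Lipschitz continuity on the compact domain $[0,2]^2$ is then automatic from $C^1$, so the function is also Lipschitz.) Second, the explanation-relevant quantities are unchanged: because I forced the $x_1$-average over each slab to equal the original, an application of Fubini shows $\exv[\rho(\mbf{x})\mid\mbf{x}_{\fml{S}}=\mbf{v}_{\fml{S}}]$ is unchanged for every $\fml{S}\subseteq\{1,2\}$ — the inner integral over the slab in the $x_1$ direction is the same, and outside the slab the integrand is the same — so \cref{tab:avg01} still holds and the SHAP scores are still $\svn{E}(1)=0$, $\svn{E}(2)\neq0$. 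Likewise, for $\delta$ small enough the similarity predicate still equals $1$ iff $x_1=1$ (the patch still pushes $\rho$ above/below $1$ off that line, up to a harmless shrinking of $\epsilon$ and $\delta$), so $\{1\}$ remains the unique AXp and CXp, making feature $1$ relevant and feature $2$ irrelevant. Since $n$ was arbitrary, the construction gives, for each $n$, an $n$-times differentiable (indeed Lipschitz) regression model with the stated pathology.

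The main obstacle is the bookkeeping in the second paragraph: verifying that the averaging constraint is genuinely independent of the $3+2n$ interpolation constraints (so that the original average is attainable), and that the $x_2$-direction convex-combination patch simultaneously glues $C^n$ across $\partial R_1$ \emph{and} does not reintroduce a non-smooth seam where the $x_2\le1$ and $x_2\ge1$ patches meet at $x_2=1$ inside the slab. Both reduce to routine linear algebra and to the observation that $\rho_3$ was already built with exactly that convex-combination structure at $x_2=1$, but they are where the real work lies; everything else (smoothness away from the slabs, the Fubini argument for the expected values, Lipschitz from $C^1$ on a compact set) is immediate.
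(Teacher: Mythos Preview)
Your proposal is correct and follows essentially the same approach as the paper: smooth the two non-differentiable seams of $\rho_3$ inside thin slabs by degree-$(3+2n)$ polynomial interpolation in $x_1$ (matching values at three fibres and $n$ derivatives at the two boundary fibres, then using the remaining free parameter to preserve the slab average), extend via the same $(2-x_2)/(x_2-1)$ convex combination in the $x_2$ direction, and repeat for the $x_2=1$ seam. Your write-up is in fact more careful than the paper's sketch, explicitly flagging the linear-algebra independence check and the potential interaction of the two patches at $x_2=1$ as the places where the real work lies.
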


%\jnote{Main claims:
%  \begin{enumerate}
%  \item We can replicate example for a differentiable regression model; and
%  \item We can replicate example for a $C^{\infty}$ regression model
%  \end{enumerate}
%}
%%$c_0,c_1,c_2,\ldots,c_{\infty}$

\section{Conclusions} \label{sec:conc}

SHAP scores find an ever-increasing range of uses in XAI. However,
recent work demonstrated that the exact computation of SHAP scores can
yield thoroughly unsatisfactory results~\cite{hms-corr23a}, in the
concrete case of ML classifiers. 
Building on this earlier work, this paper demonstrates that the
limitations identified in the case of ML classifiers can be reproduced
in far more general settings, that include regression models, but also
regression models that respect Lipschitz continuity.
As a result, even for ML models that respect Lipschitz continuity, and
so respect a well-known criterion for adversarial robustness,
%this paper shows that
SHAP scores are shown to be %can be
unsatisfactory.

The paper also argues that similar examples of regression models that
yield unsatisfactory results can be obtained for $C^{\infty}$
functions.
In light of earlier results, but also the new results demonstrated in
this paper, for the most widely used ML models there exist examples
for which computed SHAP scores will be entirely unsatisfactory.
This further justifies alternative measures of feature
importance~\cite{izza-aaai24}.
Finally, recent work have outline mechanisms for correcting SHAP
scores as well as relating them with other
alternatives~\cite{lhms-corr24,lhams-corr24}.

% RequiredL: \usepackage{etoolbox}
%\providetoggle{mkbbl}
\newtoggle{mkbbl}
% Contents if using bibtex: "\settoggle{mkbbl}{true}"
% Contents if inputing pre-generated file: "\settoggle{mkbbl}{false}"

\settoggle{mkbbl}{false}

\iftoggle{mkbbl}{
    \bibliography{refs,arlc}
}{
  % Import bibl (original .bbl) file
  \input{paper.bibl}
}

\clearpage

\appendix

\onecolumn

\section{Supplemental Materials} \label{sec:app}

\subsection{Proofs for Propositions in Classification -- Boolean Domains} \label{ssec:proofs}
%

%%%%% one irrelevant feature %%%%%

\PropIRR*

\begin{proof}
Let $\fml{M}$ be a classifier defined on the feature set $\fml{F}$ and characterized by the function defined as follows:
\begin{equation}
\kappa(\mbf{x}_{1..m}, x_n) :=
\left \{
\begin{array}{lcl}
\kappa_1(\mbf{x}_{1..m}) \lor f(\mbf{x}_{1..m}) & \quad & \tn{if~$x_n=0$} \\[3pt]
\kappa_1(\mbf{x}_{1..m}) & \quad & \tn{if~$x_n=1$}
\end{array}
\right.
\end{equation}
The non-constant sub-functions $\kappa_1$ and $f$ are defined on the feature set $\fml{F}\setminus\{n\}$,
and satisfy the following conditions:
\begin{enumerate}[itemsep=-.5pt]
\item
$\kappa_1 \neq \kappa_1 \lor f$ and $\kappa_1 \land f = 0$.
\item
Both $\kappa_1$ and $\kappa_1 \lor f$ predict a specific point $\mbf{v}_{1..m}$ to 0.
\item
The set of CXps for both $\kappa_1$ and $\kappa_1 \lor f$ with respect to the point $\mbf{v}_{1..m}$ are identical.
\end{enumerate}
Choose this specific $m$-dimensional point $\mbf{v}_{1..m}$ and extend it with $v_n=1$.
This means $\kappa_0(\mbf{v}) = \kappa_1(\mbf{v}) = 0$, and therefore $\kappa(\mbf{v}) = 0$.
For any $\fml{S} \subseteq \fml{F}\setminus\{n\}$, we have
\begin{equation}
\begin{aligned}
\Delta_{n}(\fml{S};\fml{E},\cfn{e})
&= \frac{1}{2} \cdot ( \exv[\kappa_1|\mbf{x}_{\fml{S}}=\mbf{v}_{\fml{S}}] - \exv[(\kappa_1 \lor f)|\mbf{x}_{\fml{S}}=\mbf{v}_{\fml{S}}] ) \\
&= \frac{1}{2} \cdot ( \exv[\kappa_1|\mbf{x}_{\fml{S}}=\mbf{v}_{\fml{S}}] - \exv[\kappa_1|\mbf{x}_{\fml{S}}=\mbf{v}_{\fml{S}}] - \exv[f|\mbf{x}_{\fml{S}}=\mbf{v}_{\fml{S}}]) \\
&= \frac{1}{2} \cdot ( - \exv[f|\mbf{x}_{\fml{S}}=\mbf{v}_{\fml{S}}]  ),
\end{aligned}
\end{equation}
we can infer that
$ - \exv[f|\mbf{x}_{\fml{S}}=\mbf{v}_{\fml{S}}] < 0$
for some $\fml{S}$, which implies $\svn{E}(n) < 0$.

To prove that feature $n$ is irrelevant, 
we assume the contrary that feature $n$ is relevant, and $\fml{X}$, where $n \in \fml{X}$, is an AXp of the point $\mbf{v}$.
Based on the definition of AXp, 
we only include points $\mbf{x}$ for which $\kappa_1(\mbf{x})=0$ holds.
As $\kappa_1$ and $\kappa_1 \lor f$ share the same set of CXps, they have the same set of AXps.
This means 
$\fml{X} \setminus \{n\}$ will not include any points $\mbf{x}$ such that 
either $\kappa_1(\mbf{x})\neq 0$ or $(\kappa_1 \lor f)(\mbf{x})\neq 0$ holds.
This means $\fml{X} \setminus \{n\}$ remains an AXp of the point $\mbf{v}$, leading to a contradiction.
Thus, feature $n$ is irrelevant.
\end{proof}

%%%%% one relevant feature %%%%%

\PropREL*

\begin{proof}
Let $\fml{M}$ be a classifier defined on the feature set $\fml{F}$ and characterized by the function defined as follows:
\begin{equation}
\kappa(\mbf{x}_{1..m},\mbf{x}_{m+1..2m},x_n) :=
\left \{
\begin{array}{lcl}
\kappa_0(\mbf{x}_{1..m}) & \quad & \tn{if~$x_n=0$} \\[3pt]
\kappa_1(\mbf{x}_{m+1..2m}) & \quad & \tn{if~$x_n=1$}
\end{array}
\right.
\end{equation}
The non-constant sub-functions $\kappa_0$ and $\kappa_1$ are defined on the feature sets $\fml{F}_0=\{1,\dots,m\}$ and $\fml{F}_1=\{m+1,\dots,2m\}$, respectively.
It is important to note that $\kappa_0$ is independent of $\kappa_1$ as $\fml{F}_0$ and $\fml{F}_1$ are disjoint.
Moreover, $\kappa_0$ and $\kappa_1$ are identical up to isomorphism.
(For simplicity, we assume that feature $i$ corresponds to feature $m+i$ for all $i \in \{1,\dots,m\}$.)

Choose a $n$-dimensional point $\mbf{v}$ such that: 1) $v_n=1$, 2) $v_{i} = v_{m+i}$ for any $1 \le i \le m$, 
and 3) $\kappa_0(\mbf{v}) = \kappa_1(\mbf{v}) = 1$.
This means $\kappa(\mbf{v}) = 1$.
For any $\fml{S}\subset\fml{F}\setminus\{n\}$ such that $\fml{S}\neq \emptyset$,
let $\{\fml{S}_0, \fml{S}_1\}$ be a partition of $\fml{S}$
such that $\fml{S}_0\subseteq\fml{F}_0$ and $\fml{S}_1\subseteq\fml{F}_1$, then
\begin{equation}
\begin{aligned}
\Delta_{n}(\fml{S};\fml{E},\cfn{e})
&= \frac{1}{2} \cdot ( \exv[\kappa_1|\mbf{x}_{\fml{S}_1}=\mbf{v}_{\fml{S}_1}] - \exv[\kappa_0|\mbf{x}_{\fml{S}_0}=\mbf{v}_{\fml{S}_0}] ).
\end{aligned}
\end{equation}
For any $\{\fml{S}_0, \fml{S}_1\}$, we can construct a unique new partition $\{\fml{S}'_0, \fml{S}'_1\}$
by replacing any $i \in \fml{S}_0$ with $m+i$ and any $m+i \in \fml{S}_1$ with $i$.
Let $\fml{S}' = \fml{S}'_0 \cup \fml{S}'_1$, then we have
\begin{equation}
\begin{aligned}
\Delta_{n}(\fml{S}';\fml{E},\cfn{e})
&= \frac{1}{2} \cdot ( \exv[\kappa_1|\mbf{x}_{\fml{S}'_0}=\mbf{v}_{\fml{S}'_0}] - \exv[\kappa_0|\mbf{x}_{\fml{S}'_1}=\mbf{v}_{\fml{S}'_1}] ).
\end{aligned}
\end{equation}
Besides, we have
$\exv[\kappa_1|\mbf{x}_{\fml{S}_1}=\mbf{v}_{\fml{S}_1}] = \exv[\kappa_0|\mbf{x}_{\fml{S}'_1}=\mbf{v}_{\fml{S}'_1}]$
and $\exv[\kappa_0|\mbf{x}_{\fml{S}_0}=\mbf{v}_{\fml{S}_0}] = \exv[\kappa_1|\mbf{x}_{\fml{S}'_0}=\mbf{v}_{\fml{S}'_0}]$,
which means
\begin{equation}
\Delta_{n}(\fml{S};\fml{E},\cfn{e}) = - \Delta_{n}(\fml{S}';\fml{E},\cfn{e}),
\end{equation}
note that 
$\varsigma(\fml{S}) = \varsigma(\fml{S}')$.
Hence, for any $\fml{S}\subset\fml{F}\setminus\{n\}$ such that $\fml{S}\neq \emptyset$,
there is a unique $\fml{S}'$ that can cancel its effect.
Besides, if $\fml{S} = \emptyset$ or $\fml{S} = \fml{F}\setminus\{n\}$, then we have $\Delta_{n}(\fml{S};\fml{E},\cfn{e}) = 0$.
We can derive that $\svn{E}(n) = 0$.
However, $n$ is a relevant feature.
To prove this, it is evident that $\fml{F}\setminus\fml{F}_0$ represents a weak AXp.
Moreover, $\fml{F}\setminus(\fml{F}_0\cup\{n\})$ is not a weak AXp
because allowing $x_n$ to take the value 0 will include points $\mbf{x}$ such that $\kappa_0(\mbf{x})\neq 1$.
Hence, there are AXps containing feature $n$.
\end{proof}

%%%%% combining two cases %%%%%

\PropDisorder*

\begin{proof}
Let $\fml{M}$ be a classifier defined on the feature set $\fml{F}$ and characterized by the function defined as follows:
\begin{equation}
\kappa(\mbf{x}_{1..m},\mbf{x}_{m+1..2m},x_{n-1},x_{n}) :=
\left \{
\begin{array}{lcl}
\kappa_{00}(\mbf{x}_{1..m}) & \quad & \tn{if~$x_n=0 \land x_{n-1}=0$} \\[3pt]
\kappa_{01}(\mbf{x}_{m+1..2m}) & \quad & \tn{if~$x_n=0 \land x_{n-1}=1$} \\[3pt]
\kappa_0(\mbf{x}_{1..2m,x_{n-1}}) \lor f(\mbf{x}_{1..2m}) & \quad & \tn{if~$x_n=1$}
\end{array}
\right.
\end{equation}
The non-constant sub-functions $\kappa_{00}$, $\kappa_{01}$ and $f$ are defined on the feature sets
$\fml{F}_0=\{1,\dots,m\}$, $\fml{F}_1=\{m+1,\dots,2m\}$, and $\fml{F}\setminus\{n-1,n\}$, respectively.
It is worth noting that $\kappa_{00}$ is independent of $\kappa_{01}$ as $\fml{F}_0$ and $\fml{F}_1$ are disjoint.
Also note that $\kappa_1 = \kappa_0 \lor f$.
Moreover, $\kappa_{00}$, $\kappa_{01}$ and $f$ satisfy the following conditions:
\begin{enumerate}[itemsep=-.5pt]
\item
$\kappa_{00}$ and $\kappa_{01}$ are identical up to isomorphism.
(For simplicity, we assume that feature $i$ corresponds to feature $m+i$ for all $i \in \{1,\dots,m\}$.)
\item
$\kappa_0 \neq \kappa_0 \lor f$, $\kappa_{00} \land f = 0$ and $\kappa_{01} \land f = 0$.
\item
Both $\kappa_0$ and $\kappa_0 \lor f$ predict a specific point $\mbf{v}_{1..n-1}$ to 1,
where $v_{n-1}=1$, and $v_{i} = v_{m+i}$ for any $1 \le i \le m$.
\item
The set of CXps for $\kappa_0$ and $\kappa_0 \lor f$ with respect to the point $\mbf{v}_{1..n-1}$ are identical.
\end{enumerate}
Choose this specific $n-1$-dimensional point $\mbf{v}_{1..n-1}$ and extend it with $v_n=1$,
then $\kappa_0(\mbf{v}) = \kappa_1(\mbf{v}) = 1$ and $\kappa(\mbf{v}) = 1$.
For any $\fml{S} \subseteq \fml{F}\setminus\{n\}$, we have
\begin{equation}
\begin{aligned}
\Delta_{n}(\fml{S};\fml{E},\cfn{e})
&= \frac{1}{2} \cdot ( \exv[(\kappa_0 \lor f)|\mbf{x}_{\fml{S}}=\mbf{v}_{\fml{S}}] -\exv[\kappa_0|\mbf{x}_{\fml{S}}=\mbf{v}_{\fml{S}}] ) \\
&= \frac{1}{2} \cdot ( \exv[f|\mbf{x}_{\fml{S}}=\mbf{v}_{\fml{S}}] ),
\end{aligned}
\end{equation}
which implies $\svn{E}(n) > 0$.
As $\kappa_0$ and $\kappa_0 \lor f$ share the same set of CXps, they have the same set of AXps.
By applying similar reasoning as presented in the proof of~\cref{prop:irr},
we can conclude that feature $n$ is irrelevant.
For any $\fml{S}\subset\fml{F}\setminus\{n-1,n\}$ such that $\fml{S} \neq \emptyset$, we have
\begin{equation}
\begin{aligned}
\Delta_{n-1}(\fml{S};\fml{E},\cfn{e})
&= \frac{1}{2} \cdot ( \frac{1}{2} \cdot ( \exv[\kappa_{01}|\mbf{x}_{\fml{S}}=\mbf{v}_{\fml{S}}] - \exv[\kappa_{00}|\mbf{x}_{\fml{S}}=\mbf{v}_{\fml{S}}] )
+ \frac{1}{2} \cdot ( \exv[\kappa_{11}|\mbf{x}_{\fml{S}}=\mbf{v}_{\fml{S}}] - \exv[\kappa_{10}|\mbf{x}_{\fml{S}}=\mbf{v}_{\fml{S}}] ) ) \\
&= \frac{1}{2} \cdot ( \exv[\kappa_{01}|\mbf{x}_{\fml{S}}=\mbf{v}_{\fml{S}}] - \exv[\kappa_{00}|\mbf{x}_{\fml{S}}=\mbf{v}_{\fml{S}}] ),
\end{aligned}
\end{equation}
besides, we have
\begin{equation}
\begin{aligned}
\Delta_{n-1}(\fml{S}\cup\{n\};\fml{E},\cfn{e})
&= \frac{1}{2} \cdot ( \exv[\kappa_{11}|\mbf{x}_{\fml{S}}=\mbf{v}_{\fml{S}}] - \exv[\kappa_{10}|\mbf{x}_{\fml{S}}=\mbf{v}_{\fml{S}}] ) \\
&= \frac{1}{2} \cdot ( \exv[\kappa_{01}|\mbf{x}_{\fml{S}}=\mbf{v}_{\fml{S}}] - \exv[\kappa_{00}|\mbf{x}_{\fml{S}}=\mbf{v}_{\fml{S}}] ),
\end{aligned}
\end{equation}
also note that $\Delta_{n-1}(\fml{S};\fml{E},\cfn{e}) = 0$ when $\fml{S} = \emptyset$ or $\fml{S} = \fml{F}\setminus\{n-1\}$.
By applying the same reasoning as presented in the proof of~\cref{prop:rel},
for any $\fml{S}$, there is a unique $\fml{S}'$ such that $|\fml{S}| = |\fml{S}'|$ to cancel the effect of $\fml{S}$.
Thus, we can conclude that $\svn{E}(n-1)=0$ but feature $n-1$ is relevant.
\end{proof}

%%%%% irrelevant feature with highest Sc %%%%%

\PropHighest*

\begin{proof}
Let $\fml{M}$ be a classifier defined on the feature set $\fml{F}$ and characterized by the function defined as follows:
\begin{equation}
\kappa(\mbf{x}_{1..m},x_n) :=
\left \{
\begin{array}{lcl}
0 & \quad & \tn{if~$x_n=0$} \\
\kappa_1(\mbf{x}_{1..m}) & \quad & \tn{if~$x_n=1$}
\end{array}
\right.
\end{equation}
Its sub-function $\kappa_1$ is a non-constant function defined on the feature set $\fml{F}\setminus\{n\}$,
and satisfies the following conditions:
\begin{enumerate}[itemsep=-.5pt]
\item
$\kappa_1$ predicts a specific point $\mbf{v}_{1..m}$ to 0.
\item
For any point $\mbf{x}_{1..m}$ such that $||\mbf{x}_{1..m} - \mbf{v}_{1..m}||_{0} = 1$,
we have $\kappa_1(\mbf{x}_{1..m}) = 1$.
\item
$\kappa_1$ predicts all the other points to 0.
\end{enumerate}
For example, $\kappa_1$ can be the function $\sum_{i=1}^{m}\neg{x_i}=1$,
which predicts the point $\mbf{1}_{1..m}$ to 0 and all points around this point with a Hamming distance of 1 to 1.

Select this specific $m$-dimensional point $\mbf{v}_{1..m}$ such that $\kappa_1(\mbf{v}_{1..m}) = 0$.
Extend $\mbf{v}_{1..m}$ with $v_n=1$, we have $\kappa(\mbf{v}) = 0$.
To prove that the feature $n$ is irrelevant, 
we assume the contrary that the feature $n$ is relevant, and $\fml{X}$, where $n \in \fml{X}$, is an AXp of the point $\mbf{v}$.
Based on the definition of AXp, 
we only include points $\mbf{x}$ for which $\kappa_1(\mbf{x})=0$ holds.
However, as $\kappa_0=0$, 
$\fml{X} \setminus \{n\}$ will not include any points $\mbf{x}$ such that either $\kappa_0(\mbf{x})\neq 0$ or $\kappa_1(\mbf{x})\neq 0$ holds.
This means $\fml{X} \setminus \{n\}$ remains an AXp of the point $\mbf{v}$, leading to a contradiction.
Thus, feature $n$ is irrelevant.
In addition, for $\kappa_1$ and any $\fml{S}\subseteq\fml{F}\setminus\{n\}$, we have
\begin{equation}
\exv[\kappa_1|\mbf{x}_{\fml{S}}=\mbf{v}_{\fml{S}}] = \frac{m-|\fml{S}|}{2^{m-|\fml{S}|}}.
\end{equation}
For feature $n$ and an arbitrary $\fml{S}\subseteq\fml{F}$, we have
\begin{equation}
\begin{aligned}
\Delta_{n}(\fml{S};\fml{E},\cfn{e})
= \frac{1}{2} \cdot \frac{m-|\fml{S}|}{2^{m-|\fml{S}|}},
\end{aligned}
\end{equation}
this means $\svn{E}(n) > 0$.
Besides, the unique minimal value of $\Delta_{n}(\fml{S};\fml{E},\cfn{e})$ is 0
when $\fml{S} = \fml{F}\setminus\{n\}$.

We now focus on a feature $j \neq n$.
Consider an arbitrary $\fml{S}\subseteq\fml{F}\setminus\{j,n\}$, we have
\begin{equation}
\begin{aligned}
\Delta_{j}(\fml{S}\cup\{n\};\fml{E},\cfn{e})
&= \frac{m-|\fml{S}|-1}{2^{m-|\fml{S}|-1}} - \frac{m-|\fml{S}|}{2^{m-|\fml{S}|}} \\
&= \frac{m-|\fml{S}|-2}{2^{m-|\fml{S}|}}.
\end{aligned}
\end{equation}
In this case, 
$\Delta_{j}(\fml{S}\cup\{n\};\fml{E},\cfn{e}) = - \frac{1}{2}$ if $|\fml{S}| = m-1$, which is its unique minimal value.
$\Delta_{j}(\fml{S}\cup\{n\};\fml{E},\cfn{e}) = 0$ if $|\fml{S}| = m-2$,
and $\Delta_{j}(\fml{S}\cup\{n\};\fml{E},\cfn{e}) > 0$ if $|\fml{S}| < m-2$.
Besides, we have
\begin{equation}
\begin{aligned}
\Delta_{j}(\fml{S};\fml{E},\cfn{e})
= \frac{1}{2} \cdot \frac{m-|\fml{S}|-2}{2^{m-|\fml{S}|}}.
\end{aligned}
\end{equation}
In this case, $\Delta_{j}(\fml{S};\fml{E},\cfn{e}) = - \frac{1}{4}$ if $|\fml{S}| = m-1$, which is its unique minimal value.
$\Delta_{j}(\fml{S};\fml{E},\cfn{e}) = 0$ if $|\fml{S}| = m-2$,
and $\Delta_{j}(\fml{S};\fml{E},\cfn{e}) > 0$ if $|\fml{S}| < m-2$.

Next, we prove $|\svn{E}(n)| > |\svn{E}(j)|$ by showing
$\svn{E}(n) + \svn{E}(j) > 0$ and $\svn{E}(n) - \svn{E}(j) > 0$.
Note that $\svn{E}(n) > 0$.
Additionally, $\Delta_{j}(\fml{S}\cup\{n\};\fml{E},\cfn{e}) < 0$
and $\Delta_{j}(\fml{S};\fml{E},\cfn{e}) < 0$
only when $|\fml{S}| = m-1$.
Compute the SHAP score for feature $n$:
\begin{equation}
\begin{aligned}
\svn{E}(n)
&= \sum_{\fml{S}\subseteq \fml{F}\setminus\{n\}}\frac{|\fml{S}|!(m-|\fml{S}|)!}{(m+1)!} \cdot 
\Delta_{n}(\fml{S};\fml{E},\cfn{e}) \\
&= \sum_{\fml{S}\subseteq \fml{F}\setminus\{n\}}\frac{|\fml{S}|!(m-|\fml{S}|)!}{(m+1)!} \cdot \frac{1}{2} \cdot \frac{m-|\fml{S}|}{2^{m-|\fml{S}|}} \\
&= \frac{1}{2} \cdot \frac{1}{m+1} \cdot \sum_{\fml{S}\subseteq \fml{F}\setminus\{n\}}\frac{|\fml{S}|!(m-|\fml{S}|)!}{m!} 
\cdot \frac{m-|\fml{S}|}{2^{m-|\fml{S}|}} \\
&= \frac{1}{2} \cdot \frac{1}{m+1} \cdot \sum_{0 \le |\fml{S}| \le m}\frac{|\fml{S}|!(m-|\fml{S}|)!}{m!} \cdot \frac{m!}{|\fml{S}|!(m-|\fml{S}|)!}
\cdot \frac{m-|\fml{S}|}{2^{m-|\fml{S}|}} \\
&= \frac{1}{2} \cdot \frac{1}{m+1} \cdot \sum^{m}_{k=1}\frac{k}{2^{k}} \\
&= \frac{1}{2} \cdot \frac{1}{m+1} \cdot \frac{2^{m+1} - m - 2}{2^{m}}  \\
&= \frac{1}{m+1} \cdot \frac{2^{m+1} - m - 2}{2^{m+1}}.
\end{aligned}
\end{equation}
Now we focus on a feature $j \neq n$.
Consider the subset $\fml{S}=\fml{F}\setminus\{j,n\}$ where $|\fml{S}| = m-1$, we have
\begin{equation}
\begin{aligned}
&\frac{|\fml{S}\cup\{n\}|!(m-|\fml{S}\cup\{n\}|)!}{(m+1)!} \cdot \frac{m-|\fml{S}|-2}{2^{m-|\fml{S}|}} \\
&= - \frac{1}{2} \cdot \frac{1}{m+1},
\end{aligned}
\end{equation}
moreover, we have
\begin{equation}
\begin{aligned}
&\frac{|\fml{S}|!(m-|\fml{S}|)!}{(m+1)!} \cdot \frac{1}{2} \cdot \frac{m-|\fml{S}|-2}{2^{m-|\fml{S}|}} \\
&= - \frac{1}{4} \cdot \frac{1}{m(m+1)}.
\end{aligned}
\end{equation}
The sum of these three values is
\begin{equation}
\begin{aligned}
&\frac{1}{m+1} \cdot \frac{2^{m+1} - m - 2}{2^{m+1}} - \frac{1}{2} \cdot \frac{1}{m+1} - \frac{1}{4} \cdot \frac{1}{m(m+1)} \\
&= \frac{1}{m+1} \cdot \left( \frac{(2^{m+1} - m - 2)m}{m2^{m+1}} - \frac{m2^m}{m2^{m+1}} - \frac{2^{m-1}}{m2^{m+1}} \right) \\
&= \frac{1}{m(m+1)2^{m+1}} \cdot \left( (m - \frac{1}{2})2^m - m^2 - 2m \right),
\end{aligned}
\end{equation}
since $m \ge 3$, the sum of these three values is always greater than 0.
Thus, we can conclude that $\svn{E}(n) + \svn{E}(j) > 0$.

To show $\svn{E}(n) - \svn{E}(j) > 0$, we focus on all $\fml{S}\subseteq\fml{F}\setminus\{n\}$ where $|\fml{S}| < m-2$.
This is because, as previously stated,
$\Delta_{j}(\fml{S}\cup\{n\};\fml{E},\cfn{e}) \le 0$
and $\Delta_{j}(\fml{S};\fml{E},\cfn{e}) \le 0$ if $|\fml{S}| \ge m-2$.

Moreover, for all $\fml{S}\subseteq\fml{F}\setminus\{n\}$ where $|\fml{S}|=k$ and $0 < k \le m-3$, we compute the following three quantities:
\begin{equation}
\begin{aligned}
Q_1 &:= \sum_{\fml{S}\subseteq\fml{F}\setminus\{n\}, |\fml{S}|=k} \Delta_{n}(\fml{S};\fml{E},\cfn{e}), \\
Q_2 &:= \sum_{\fml{S}\subseteq\fml{F}\setminus\{j,n\}, |\fml{S}|=k-1} \Delta_{j}(\fml{S}\cup\{n\};\fml{E},\cfn{e}), \\
Q_3 &:= \sum_{\fml{S}\subseteq\fml{F}\setminus\{j,n\}, |\fml{S}|=k} \Delta_{j}(\fml{S};\fml{E},\cfn{e}),
\end{aligned}
\end{equation}
and show that $Q_1 - Q_2 - Q_3 > 0$.
Note that $Q_1$, $Q_2$ and $Q_3$ share the same coefficient $\frac{k!(n-k-1)!}{n!}$.
For feature $n$, we pick all possible $\fml{S}\subseteq\fml{F}\setminus\{n\}$ where $|\fml{S}| = k$, which implies $|\fml{S} \cup \{n\}| = k+1$, then
\begin{equation}
Q_1 = \binom{m}{|\fml{S}|} \cdot \frac{1}{2} \cdot \frac{m-|\fml{S}|}{2^{m-|\fml{S}|}} 
= \binom{m}{k} \cdot \frac{1}{2} \cdot \frac{m-k}{2^{m-k}}.
\end{equation}
For a feature $j \neq n$.
We pick all possible $\fml{S}\subseteq\fml{F}\setminus\{j,n\}$ where $|\fml{S}| = k-1$, which implies $|\fml{S} \cup \{j, n\}| = k+1$, then
\begin{equation}
Q_2 = \binom{m-1}{|\fml{S}|} \cdot \frac{m-|\fml{S}|-2}{2^{m-|\fml{S}|}} 
= \binom{m-1}{k-1} \cdot \frac{1}{2} \cdot \frac{m-k-1}{2^{m-k}}.
\end{equation}
We pick all possible $\fml{S}\subseteq\fml{F}\setminus\{j,n\}$ where $|\fml{S}| = k$, which implies $|\fml{S} \cup \{j\}| = k+1$, then
\begin{equation}
Q_3 = \binom{m-1}{|\fml{S}|} \cdot \frac{1}{2} \cdot \frac{m-|\fml{S}|-2}{2^{m-|\fml{S}|}} 
= \binom{m-1}{k} \cdot \frac{1}{2} \cdot \frac{m-k-2}{2^{m-k}}.
\end{equation}
Then we compute $Q_1 - Q_2 - Q_3$:
\begin{equation}
\begin{aligned}
&\binom{m}{k} \cdot \frac{1}{2} \cdot \frac{m-k}{2^{m-k}} - \binom{m-1}{k-1} \cdot \frac{1}{2} \cdot \frac{m-k-1}{2^{m-k}}
- \binom{m-1}{k} \cdot \frac{1}{2} \cdot \frac{m-k-2}{2^{m-k}} \\
&= \frac{1}{2} \cdot \frac{1}{2^{m-k}} 
\left[ \binom{m}{k} (m-k) - \binom{m-1}{k-1} (m - k - 1) - \binom{m-1}{k} (m - k - 2) \right] \\
&= \frac{1}{2} \cdot \frac{1}{2^{m-k}} \left[ \binom{m-1}{k-1} + 2 \binom{m-1}{k} \right],
\end{aligned}
\end{equation}
this means that $\svn{E}(n) - \svn{E}(j) > 0$.
Hence, we can conclude that $|\svn{E}(n)| > |\svn{E}(j)|$.
\end{proof}

%%%%% relevant/irrelevant features both have pos Sc or neg Sc %%%%%

\PropSign*

\begin{proof}
Let $\fml{M}$ be a classifier defined on the feature set $\fml{F}$ and characterized by the function defined as follows:
\begin{equation}
\kappa(\mbf{x}_{1..m},x_{n-1},x_{n}) :=
\left \{
\begin{array}{lcl}
\kappa'(\mbf{x}_{1..m}) & \quad & \tn{if~$x_n=0 \land x_{n-1}=0$}\\[3pt]
\kappa'(\mbf{x}_{1..m}) \lor f(\mbf{x}_{1..m}) & \quad & \tn{if~$x_n=0 \land x_{n-1}=1$}\\[3pt]
\kappa'(\mbf{x}_{1..m}) \lor g(\mbf{x}_{1..m}) & \quad & \tn{if~$x_n=1 \land x_{n-1}=0$}\\[3pt]
\kappa'(\mbf{x}_{1..m}) \lor f(\mbf{x}_{1..m}) \lor g(\mbf{x}_{1..m}) & \quad & \tn{if~$x_n=1 \land x_{n-1}=1$}
\end{array}
\right.
\end{equation}
The non-constant sub-functions $\kappa'$, $f$ and $g$ are defined on the feature set $\fml{F}\setminus\{n\}$.
Moreover, $\kappa'$, $f$ and $g$ satisfy the following conditions:
\begin{enumerate}[itemsep=-.5pt]
\item
$\kappa_0 \neq \kappa_1$, 
$\kappa' \land f = 0$, $\kappa' \land g = 0$ and $f \land g = 0$.
\item
$f$ predicts a specific point $\mbf{v}_{1..m}$ to 1.
\item
$\kappa'$ and $g$ predict this specific point $\mbf{v}_{1..m}$ to 0.
\item
The set of CXps for $\kappa_0$ and $\kappa_1$ with respect to the point $\mbf{v}_{1..n-1}=(v_1, \dots, v_m,1)$ are identical.
\end{enumerate}
Choose the specific $m$-dimensional point $\mbf{v}_{1..m}$ that $f$ predicts to 1, and extend it with $v_{n-1}=v_n=1$,
we have $\kappa_{00}(\mbf{v}) = \kappa_{10}(\mbf{v}) = 0$ and $\kappa_{01}(\mbf{v}) = \kappa_{11}(\mbf{v}) = 1$,
which means $\kappa(\mbf{v}) = 1$.
As $\kappa_0$ and $\kappa_1$ share the same set of CXps, they have the same set of AXps. 
By applying similar reasoning as presented in the proof of~\cref{prop:irr}, we can conclude that feature $n$ is irrelevant.
To prove that feature $n-1$ is relevant, we assume the contrary that $n-1$ is irrelevant.
In this case, we can flip the value $v_{n-1}$ from 1 to 0 and pick the point $\mbf{v}'=(v_1, \dots, v_m, 0, 1)$ which predicted to 0
by the function $\kappa_{10}$. This means $\kappa(\mbf{v}')=0$ , leading to a contradiction. Thus, feature $n-1$ is relevant.

Next, we analyse the SHAP scores of these two features.
For feature $n$, consider an arbitrary $\fml{S}\subseteq\fml{F}\setminus\{n-1,n\}$, we have
\begin{equation}
\begin{aligned}
\Delta_{n}(\fml{S};\fml{E},\cfn{e})
&= \frac{1}{2} \cdot ( \frac{1}{2} \cdot (\exv[\kappa_{10}|\mbf{x}_{\fml{S}}=\mbf{v}_{\fml{S}}] - \exv[\kappa_{00}|\mbf{x}_{\fml{S}}=\mbf{v}_{\fml{S}}])
+ \frac{1}{2} \cdot ( \exv[\kappa_{11}|\mbf{x}_{\fml{S}}=\mbf{v}_{\fml{S}}] - \exv[\kappa_{01}|\mbf{x}_{\fml{S}}=\mbf{v}_{\fml{S}}] ) ) \\
&= \frac{1}{2} \cdot \exv[g|\mbf{x}_{\fml{S}}=\mbf{v}_{\fml{S}}],
\end{aligned}
\end{equation}
moreover, we have
\begin{equation}
\begin{aligned}
\Delta_{n}(\fml{S}\cup\{n-1\};\fml{E},\cfn{e})
&= \frac{1}{2} \cdot ( \exv[\kappa_{11}|\mbf{x}_{\fml{S}}=\mbf{v}_{\fml{S}}] - \exv[\kappa_{01}|\mbf{x}_{\fml{S}}=\mbf{v}_{\fml{S}}] ) \\
&= \frac{1}{2} \cdot \exv[g|\mbf{x}_{\fml{S}}=\mbf{v}_{\fml{S}}].
\end{aligned}
\end{equation}
For feature $n-1$, consider an arbitrary $\fml{S}\subseteq\fml{F}\setminus\{n-1,n\}$, we have
\begin{equation}
\begin{aligned}
\Delta_{n-1}(\fml{S};\fml{E},\cfn{e})
&= \frac{1}{2} \cdot ( \frac{1}{2} \cdot ( \exv[\kappa_{01}|\mbf{x}_{\fml{S}}=\mbf{v}_{\fml{S}}] - \exv[\kappa_{00}|\mbf{x}_{\fml{S}}=\mbf{v}_{\fml{S}}] )
+ \frac{1}{2} \cdot ( \exv[\kappa_{11}|\mbf{x}_{\fml{S}}=\mbf{v}_{\fml{S}}] - \exv[\kappa_{10}|\mbf{x}_{\fml{S}}=\mbf{v}_{\fml{S}}] ) ) \\
&= \frac{1}{2} \cdot \exv[f|\mbf{x}_{\fml{S}}=\mbf{v}_{\fml{S}}],
\end{aligned}
\end{equation}
and we have
\begin{equation}
\begin{aligned}
\Delta_{n-1}(\fml{S}\cup\{n\};\fml{E},\cfn{e})
&= \frac{1}{2} \cdot ( \exv[\kappa_{11}|\mbf{x}_{\fml{S}}=\mbf{v}_{\fml{S}}] - \exv[\kappa_{10}|\mbf{x}_{\fml{S}}=\mbf{v}_{\fml{S}}] ) \\
&= \frac{1}{2} \cdot \exv[f|\mbf{x}_{\fml{S}}=\mbf{v}_{\fml{S}}].
\end{aligned}
\end{equation}
Clearly, by adjusting the sub-functions $f$ and $g$, we are able to change the magnitude of the SHAP scores of both features. 
Importantly, in all cases, their SHAP scores have the same sign.
\end{proof}

%%%%% All the identified issues distributed evenly  %%%%%

\PropEvenly*

\begin{proof}
Let $\kappa'$ be the classification function of $\fml{M}'$, i.e. $\kappa = \neg \kappa'$.
Note that $\cfn{e}(\fml{S};\fml{E}) = \exv[\kappa|\mbf{x}_{\fml{S}}=\mbf{v}_{\fml{S}}]$
and $\cfn{e}(\fml{S};\fml{E}') = \exv[\kappa'|\mbf{x}_{\fml{S}}=\mbf{v}_{\fml{S}}]$.
Evidently, $\exv[\kappa|\mbf{x}_{\fml{S}}=\mbf{v}_{\fml{S}}] = 1 - \exv[\kappa'|\mbf{x}_{\fml{S}}=\mbf{v}_{\fml{S}}]$ for any $\fml{S}$,
which means 
$\Delta_{i}(\fml{S};\fml{E},\cfn{e}) = -\Delta_{i}(\fml{S};\fml{E}',\cfn{e})$.
Let $\svn{E}(i;\fml{E}',\cfn{e})$ be the SHAP score of feature $i$ in $\fml{E}'$, then we have $\svn{E}(i;\fml{E},\cfn{e}) = -\svn{E}(i;\fml{E}',\cfn{e})$,
which means that $|\svn{E}(i;\fml{E},\cfn{e})| = |\svn{E}(i;\fml{E}',\cfn{e})|$.
Hence, any issue (I1 to I6) that occurs in $\fml{E}$ will also occur in $\fml{E}'$.
\end{proof}

\subsection{Calculations and Proofs for Regression Problems} \label{ssec:calcs}

%%
%% Obs: \alpha=1/2
%%
\subsubsection{Expected values of $\rho_2$:}
~\\

\begin{enumerate}
\item $\fml{S}=\emptyset$:
  \begin{align}
    \exv[\rho_2(\mbf{x})&\,|\,\mbf{x}_{\fml{S}}=\mbf{v}_{\fml{S}}]=
    \nonumber\\
    &
    = \sfrac{1}{4}\int_{-\sfrac{1}{2}}^{\sfrac{3}{2}}\int_{-\sfrac{1}{2}}^{\sfrac{3}{2}}\rho_2(x_1,x_2)d{x_1}d{x_2}
    \nonumber\\
    &
    = \sfrac{1}{4}\left[
      \int_{-\sfrac{1}{2}}^{\sfrac{3}{2}}\int_{\sfrac{1}{2}}^{\sfrac{3}{2}}{x_1}d{x_1}d{x_2}
      +
      \int_{-\sfrac{1}{2}}^{\sfrac{1}{2}}\int_{-\sfrac{1}{2}}^{\sfrac{1}{2}}(x_2-2)d{x_1}d{x_2}
      +
      \int_{\sfrac{1}{2}}^{\sfrac{3}{2}}\int_{-\sfrac{1}{2}}^{\sfrac{1}{2}}(x_2+1)d{x_1}d{x_2}\right]
    \nonumber\\
    &
    = \sfrac{1}{4}\left[
      2\int_{\sfrac{1}{2}}^{\sfrac{3}{2}}{x_1}d{x_1}
      +
      \int_{-\sfrac{1}{2}}^{\sfrac{1}{2}}(x_2-2)d{x_2}
      +
      \int_{\sfrac{1}{2}}^{\sfrac{3}{2}}(x_2+1)d{x_2}\right]
    \nonumber\\
    &
    = \sfrac{1}{4}\left[
      2\bigg\lbrack\sfrac{x_1^2}{2}\bigg\rbrack_{\sfrac{1}{2}}^{\sfrac{3}{2}}
      +
      \bigg\lbrack(\sfrac{x_2^2}{2}-2x_2)\bigg\rbrack_{-\sfrac{1}{2}}^{\sfrac{1}{2}}
      +
      \bigg\lbrack(\sfrac{x_2^2}{2}+x_2)\bigg\rbrack_{\sfrac{1}{2}}^{\sfrac{3}{2}}
      \right]
    \nonumber\\
    &
    = \sfrac{1}{4}\left[
      2(\sfrac{9}{8}-\sfrac{1}{8})
      +
      (\sfrac{1}{8}-\sfrac{1}{8}-2\sfrac{1}{2}-2\sfrac{1}{2})
      +
      (\sfrac{9}{8}-\sfrac{1}{8}+\sfrac{12}{8}-\sfrac{4}{8})
      \right]
    \nonumber\\
    &
    = \sfrac{1}{2}
    \nonumber%\\
  \end{align}
\item $\fml{S}=\{1\}$:
  \begin{align}
    \exv[\rho_2(\mbf{x})\,|\,\mbf{x}_{\fml{S}}=\mbf{v}_{\fml{S}}]
    %\nonumber\\
    %&
    = \sfrac{1}{2}\int_{-\sfrac{1}{2}}^{\sfrac{3}{2}}\rho_2(1,x_2)d{x_2}
    %\nonumber\\
    %&
    = \sfrac{1}{2}\left[
      \int_{-\sfrac{1}{2}}^{\sfrac{3}{2}}1d{x_2}
      \right]
    %\nonumber\\
    %&
    = 1
    \nonumber%\\
  \end{align}
\item $\fml{S}=\{2\}$:

  \begin{align}
    \exv[\rho_2(\mbf{x})&\,|\,\mbf{x}_{\fml{S}}=\mbf{v}_{\fml{S}}]=
    \nonumber\\
    &
    = \sfrac{1}{2}\int_{-\sfrac{1}{2}}^{\sfrac{3}{2}}\rho_2(x_1,x_2)d{x_1}
    \nonumber\\
    &
    = \sfrac{1}{2}\left[
      \int_{\sfrac{1}{2}}^{\sfrac{3}{2}}{x_1}d{x_1}
      +
      \int_{-\sfrac{1}{2}}^{\sfrac{1}{2}}2d{x_1}
      \right]
    \nonumber\\
    &
    = \sfrac{1}{2}\left[
      \bigg\lbrack\sfrac{x_1^2}{2}\bigg\rbrack_{\sfrac{1}{2}}^{\sfrac{3}{2}}
      +
      2\bigg\lbrack{x_1}\bigg\rbrack_{-\sfrac{1}{2}}^{\sfrac{1}{2}}
      \right]
    \nonumber\\
    &
    = \sfrac{1}{2}\left[
      \sfrac{9}{8}-\sfrac{1}{8}
      +
      2
      \right]
    \nonumber\\
    &
    = \sfrac{3}{2}
    \nonumber%\\
  \end{align}

\item $\fml{S}=\{1,2\}$:
  \begin{align}
    \exv[\rho_2(\mbf{x})\,|\,\mbf{x}_{\fml{S}}=\mbf{v}_{\fml{S}}]
    = \rho_2(1,1) = 1
    \nonumber%\\
  \end{align}
\end{enumerate}

\subsubsection{Proof regarding $\rho_3$:}

\begin{proposition}
  $\rho_3$ (see~\cref{fig:rmlc}) is Lipschitz-continuous.
\end{proposition}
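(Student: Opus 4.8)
The plan is to combine three ingredients: (i) continuity of $\rho_3$ across the two ``gluing'' lines $x_1=1$ and $x_2=1$; (ii) a uniform Lipschitz bound for $\rho_3$ restricted to each of the four closed rectangles into which these lines partition $\mbb{F}=[0,2]^2$; and (iii) a segment-subdivision argument that lifts the per-region bounds to a global one. Fix any norm $l_p$ on $\mbb{F}$ and the usual absolute value on $\mbb{R}$, and write $R_{00},R_{01},R_{10},R_{11}$ for the four unit squares obtained by intersecting $[0,2]^2$ with the half-planes determined by $x_1=1$ and $x_2=1$ (which branch of $\rho_3$ is used on which square depends only on $\mathrm{sign}(\alpha)$, and the whole argument is symmetric in that sign).

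First I would check continuity, for which it suffices to verify that the four branches agree on the two gluing lines. On $\{x_1=1\}$ the first two branches both evaluate to $1$ and the last two both evaluate to $1$, so $\rho_3\equiv 1$ there; on $\{x_2=1\}$ the ``$x_2\le1$'' branch and the matching ``$x_2\ge1$'' branch both reduce to $x_1$ (in the region $\alpha x_1\le\alpha$) and both reduce to $(1+4|\alpha|)x_1-4|\alpha|$ (in the region $\alpha x_1\ge\alpha$). Hence $\rho_3$ is well-defined and continuous on all of $\mbb{F}$; this is a handful of routine substitutions. (As a byproduct this also proves the earlier claim that $\rho_3$ is continuous.)

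Next, on each rectangle $\rho_3$ coincides with a single branch, which is either affine in $(x_1,x_2)$ (branches one and two) or bilinear, i.e.\ affine in each variable separately (branches three and four). In every case $\rho_3$ is $C^1$ on the convex rectangle, with partial derivatives that are themselves affine and hence bounded on the bounded domain $[0,2]^2$: a direct estimate gives, for instance, $|\partial_{x_1}\rho_3|\le 1+28|\alpha|$ and $|\partial_{x_2}\rho_3|\le 28|\alpha|$ on every branch (with smaller bounds on the affine pieces, using $|x_i-1|\le1$ on $[0,2]$). By the mean value inequality on a convex set, $\rho_3$ restricted to each $R_{ij}$ is therefore $L$-Lipschitz for a single finite constant $L=L(\alpha)$, whose exact value depends on the chosen $l_p$.

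Finally I would glue. Given $\mbf{x},\mbf{y}\in\mbb{F}$, the segment $[\mbf{x},\mbf{y}]$ meets each of the lines $x_1=1$ and $x_2=1$ in at most one point, so these at most two points cut the segment into at most three collinear subsegments, each contained in a single closed rectangle $R_{ij}$ (on whose boundary $\rho_3$ is, by continuity, the restriction of the corresponding Lipschitz branch). Applying the per-region bound to each subsegment and summing via the triangle inequality yields $|\rho_3(\mbf{x})-\rho_3(\mbf{y})|\le L\sum_i\lVert \mbf{p}_i-\mbf{p}_{i+1}\rVert_{l_p}=L\lVert\mbf{x}-\mbf{y}\rVert_{l_p}$, where the $\mbf{p}_i$ are the consecutive endpoints, using that the lengths of collinear subsegments add up in any norm. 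Hence $\rho_3$ is Lipschitz-continuous with constant $L$. The one step that needs care is this gluing lemma: one must state explicitly that finitely many cut lines split every segment into a bounded number of pieces each lying in one region, and invoke continuity to make $\rho_3$ unambiguous on the shared boundaries; the remaining calculations are routine.
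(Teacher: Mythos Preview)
Your proposal is correct and follows essentially the same outline as the paper's own proof, which is only a brief sketch: the paper simply observes that $\rho_3$ is a continuous gluing of degree-one (in each variable) polynomials, that such polynomials are Lipschitz on a bounded domain, and that a continuous gluing of finitely many Lipschitz pieces is Lipschitz with constant the maximum of the pieces' constants. Your argument supplies the details the paper omits --- the explicit boundary checks on $\{x_1=1\}$ and $\{x_2=1\}$, the derivative bounds via the mean value inequality, and the segment-subdivision justification of the gluing step --- so it is a fleshed-out version of the same approach rather than a different route.
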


\begin{proof}(Sketch)
  $\rho_3$ is composed of continuously glued 1-degree 1 polynomials.
  It is well known that 1-degree 1 polynomials are
  Lipschitz-continuous (the exact constant depends on the distance
  used on $\mbb{R}^2$) and that gluing several Lipschitz-continuous
  functions is still a Lipschitz-continuous function (and the constant is
  the maximum of the constants of the glued functions).\\
  (Obs: computing the actual constants does not provide relevant
  insights, and is considerably laborious.
\end{proof}

\end{document}